 \def\bibfont{\small}%
 \def\bibsep{\smallskipamount}%
 \def\bibhang{24pt}%
 \def\newblock{\ }%
 \def\BIBand{and}%
\def\UrlBreaks{\do\/\do-}
\definecolor{innerboxcolor}{rgb}{.9,.95,1}
\definecolor{outerlinecolor}{rgb}{.6,0,.2}
\newcommand{\red}[1]{\textcolor{red}{#1}}
\newcommand{\hn}[1]{\fcolorbox{outerlinecolor}{innerboxcolor}{
    \begin{minipage}{.9\textwidth}
      \red{\bf HN Comment:} {#1}
  \end{minipage}} \\
}
\newcommand{\SDshort}[1]{
  \red{\bf [ SD: {#1} ]}
}
\newcommand{\hnshort}[1]{
  \red{\bf [ HN: {#1} ]}
}
\newcommand{\ebshort}[1]{
  \red{\bf [ EB: {#1} ]}
}
\providecommand{\comment}[1]{}
\renewcommand{\theassumption}{\Alph{assumption}}
\newcommand{\prior}{P}
\newcommand{\actionset}{\mc{A}}
\providecommand{\comment}[1]{}
\renewcommand{\theassumption}{\Alph{assumption}}
\renewcommand\thefootnoteB{*}
\let\footnoteR\footnoteB
\let\footnote\footnoteA
\begin{document}



\RUNTITLE{Distilled Thompson Sampling via Imitation Learning}

\TITLE{Distilled Thompson Sampling: Practical and Efficient Thompson Sampling via Imitation Learning}

\ARTICLEAUTHORS{%
\AUTHOR{Hongseok Namkoong$^*$}
\AFF{Decision, Risk, and Operations Division, Columbia Business School, New York, NY 10027, \EMAIL{namkoong@gsb.columbia.edu}}
\AUTHOR{Samuel Daulton$^*$}
\AFF{Central Applied Science, Meta, Menlo Park, CA 94025, \EMAIL{sdaulton@meta.com}} 

\AUTHOR{Eytan Bakshy}
\AFF{Central Applied Science, Meta, Menlo Park, CA 94025, \EMAIL{ebakshy@meta.com}} 
} 

\ABSTRACT{Thompson sampling (TS) has emerged as a robust technique for contextual bandit
problems. However, TS requires posterior inference and optimization for action
generation, prohibiting its use in many online platforms where latency and
ease of deployment are of concern.  We operationalize TS by proposing a novel
imitation-learning-based algorithm that distills a TS policy into an explicit
policy representation, allowing fast decision-making and easy deployment in
mobile and server-based environments. Using batched data collected under the
imitation policy, our algorithm iteratively performs offline updates to the TS
policy, and learns a new explicit policy representation to imitate it.
Empirically, our imitation policy achieves performance comparable to batch TS
while allowing more than an order of magnitude reduction in decision-time
latency. Buoyed by low latency and simplicity of implementation, our algorithm
has been successfully deployed in multiple video upload systems for Meta.  Using a
randomized controlled trial, we show our algorithm resulted in significant
improvements in video quality and watch time. 


}


\KEYWORDS{contextual bandits, Thompson sampling, imitation learning, internet
  applications} \HISTORY{This paper was first submitted on May, 2021.}

\maketitle

%



\else

\ifdefined\useorstyle

\documentclass[opre,nonblindrev]{informs3}

\DoubleSpacedXI 

\usepackage{endnotes}
\let\footnote=\endnote
\let\enotesize=\normalsize
\def\notesname{Endnotes}%
\def\makeenmark{\hbox to1.275em{\theenmark.\enskip\hss}}
\def\enoteformat{\rightskip0pt\leftskip0pt\parindent=1.275em
  \leavevmode\llap{\makeenmark}}



\usepackage{natbib}
 \bibpunct[, ]{(}{)}{,}{a}{}{,}%
 \def\bibfont{\small}%
 \def\bibsep{\smallskipamount}%
 \def\bibhang{24pt}%
 \def\newblock{\ }%
 \def\BIBand{and}%

\TheoremsNumberedThrough     
\ECRepeatTheorems

\EquationsNumberedThrough    


\usepackage{hyperref}
\usepackage{microtype}
\usepackage{booktabs} 
\usepackage{./statistics-macros-ms}
\usepackage{pgfplotstable}
\usepackage{graphicx}
\usepackage{subcaption}
\usepackage{float}
\usepackage{soul,color}

\usepackage{algorithm}
\usepackage{algorithmic}

\usepackage{overpic}
\usepackage{tikz}
\usepackage{rotating}
\usepackage{psfrag}
\usepackage{bm}
\usepackage{placeins}
\usepackage{textcomp}
\usepackage{url}
\def\UrlBreaks{\do\/\do-}

\definecolor{innerboxcolor}{rgb}{.9,.95,1}
\definecolor{outerlinecolor}{rgb}{.6,0,.2}

\newcommand{\red}[1]{\textcolor{red}{#1}}
\newcommand{\hn}[1]{\fcolorbox{outerlinecolor}{innerboxcolor}{
    \begin{minipage}{.9\textwidth}
      \red{\bf HN Comment:} {#1}
  \end{minipage}} \\
}

\newcommand{\SDshort}[1]{
  \red{\bf [ SD: {#1} ]}
}

\newcommand{\hnshort}[1]{
  \red{\bf [ HN: {#1} ]}
}

\newcommand{\ebshort}[1]{
  \red{\bf [ EB: {#1} ]}
}

\providecommand{\comment}[1]{}
\renewcommand{\theassumption}{\Alph{assumption}}
\newcommand{\prior}{P}
\newcommand{\actionset}{\mc{A}}

\providecommand{\comment}[1]{}

\renewcommand{\theassumption}{\Alph{assumption}}

\usepackage{manyfoot}

\DeclareNewFootnote{A}
\DeclareNewFootnote{B}
\renewcommand\thefootnoteB{*}
\let\footnoteR\footnoteB
\let\footnote\footnoteA

\begin{document}



\RUNTITLE{Distilled Thompson Sampling via Imitation Learning}

\TITLE{Distilled Thompson Sampling: Practical and Efficient Thompson Sampling via Imitation Learning}

\ARTICLEAUTHORS{%
\AUTHOR{Hongseok Namkoong$^*$}
\AFF{Decision, Risk, and Operations Division, Columbia Business School, New York, NY 10027, \EMAIL{namkoong@gsb.columbia.edu}}
\AUTHOR{Samuel Daulton$^*$}
\AFF{Central Applied Science, Meta, Menlo Park, CA 94025, \EMAIL{sdaulton@meta.com}} 

\AUTHOR{Eytan Bakshy}
\AFF{Central Applied Science, Meta, Menlo Park, CA 94025, \EMAIL{ebakshy@meta.com}} 
} 

\ABSTRACT{}


\KEYWORDS{contextual bandits, Thompson sampling, imitation learning, internet
  applications} \HISTORY{This paper was first submitted on August, 2023.}

\maketitle

%


\else


\documentclass[11pt]{article}
\usepackage[numbers]{natbib}
\usepackage{fullpage}
\usepackage{hyperref}
\usepackage{microtype}
\usepackage{booktabs} 
\usepackage{./statistics-macros}
\usepackage{pgfplotstable}
\usepackage{graphicx}
\usepackage{subcaption}
\usepackage{float}
\usepackage{soul,color}

\usepackage{algorithm}
\usepackage{algorithmic}

\usepackage{overpic}
\usepackage{tikz}
\usepackage{rotating}
\usepackage{psfrag}
\usepackage{bm}
\usepackage{placeins}
\usepackage{textcomp}
\usepackage{url}
\def\UrlBreaks{\do\/\do-}

\definecolor{innerboxcolor}{rgb}{.9,.95,1}
\definecolor{outerlinecolor}{rgb}{.6,0,.2}

\newcommand{\red}[1]{\textcolor{red}{#1}}
\newcommand{\hn}[1]{\fcolorbox{outerlinecolor}{innerboxcolor}{
    \begin{minipage}{.9\textwidth}
      \red{\bf HN Comment:} {#1}
  \end{minipage}} \\
}

\newcommand{\SDshort}[1]{
  \red{\bf [ SD: {#1} ]}
}

\newcommand{\hnshort}[1]{
  \red{\bf [ HN: {#1} ]}
}

\newcommand{\ebshort}[1]{
  \red{\bf [ EB: {#1} ]}
}

\providecommand{\comment}[1]{}
\renewcommand{\theassumption}{\Alph{assumption}}
\newcommand{\prior}{P}
\newcommand{\actionset}{\mc{A}}

\providecommand{\comment}[1]{}

\renewcommand{\theassumption}{\Alph{assumption}}

\usepackage{manyfoot}

\DeclareNewFootnote{A}
\DeclareNewFootnote{B}
\renewcommand\thefootnoteB{*}
\let\footnoteR\footnoteB
\let\footnote\footnoteA

\makeatletter
\long\def\@makecaption#1#2{
  \vskip 0.8ex
  \setbox\@tempboxa\hbox{\small {\bf #1:} #2}
  \parindent 1.5em  
  \dimen0=\hsize
  \advance\dimen0 by -3em
  \ifdim \wd\@tempboxa >\dimen0
  \hbox to \hsize{
    \parindent 0em
    \hfil
    \parbox{\dimen0}{\def\baselinestretch{0.96}\small
      {\bf #1.} #2
    }
    \hfil}
  \else \hbox to \hsize{\hfil \box\@tempboxa \hfil}
  \fi
}
\makeatother

\begin{document}
\abovedisplayskip=8pt plus0pt minus3pt
\belowdisplayskip=8pt plus0pt minus3pt

\begin{center}
  {\LARGE Distilled Thompson Sampling: Practical and Efficient \\Thompson Sampling via Imitation Learning} \\
  \vspace{.5cm} {\Large Hongseok Namkoong$^{1}$\footnoteR{Equal
      contribution.}
    ~~~ Samuel Daulton$^{*2}$
    ~~~Eytan Bakshy$^{3}$ } \\
  \vspace{.2cm}
  $^{1}$Decision, Risk, and Operations Division, Columbia Business School \\
  $^{2, 3}$Meta Central Applied Science \\
  \vspace{.2cm}
  \texttt{namkoong@gsb.columbia.edu, sdaulton@meta.com, ebakshy@meta.com}
\end{center}


\begin{abstract}
  
\end{abstract}

\fi


\section{Introduction}
\label{section:introduction}

In the past decade, Thompson sampling~\citep{Thompson33} has emerged as a
powerful algorithm for contextual bandit problems. The underlying principle is
simple: an action is chosen with probability proportional to it being optimal
under the current posterior distribution. Driven by the algorithm's strong
empirical performance~\citep{Scott10, ChapelleLi11, MayLe11}, many authors
have recently established rigorous performance
guarantees~\citep{KaufmannKoMu12, AgrawalGo13a, AgrawalGo13b,
  GopalanMaMa14,HondaTa14, RussoVa14c, AbeilleLa17}.  Thompson sampling is
increasingly being applied to a broad range of applications including revenue
management~\citep{FerreiraSiWa18}, internet
advertising~\citep{GraepelCaBoHe10, AgarwalLoTrXiZh14, SchwartzBrFa17}, and
recommendation systems~\citep{KawaleBuKvTrCh15}.

Despite its conceptual simplicity and strong performance, Thompson sampling
can be difficult to deploy in practice. Thompson sampling consists of two
steps: \emph{posterior sampling} and \emph{optimization}. \emph{Posterior
  sampling} requires evaluating a potentially large number of actions from a
well-calibrated probabilistic model.  Accurately calibrating uncertainty is
important for optimally trading off exploration and exploitation, and is
critical to practical performance~\citep{RiquelmeTuSn18}. Large-scale
probabilistic machine learning models based on deep networks show much promise
as they can adaptively learn good feature representations for uncertainty
calibration~\citep{WangYe20}.  However, sampling from these probabilistic
models can be demanding in terms of computation and memory. While approximate
inference methods with better runtime characteristics exist, they often
produce poorly calibrated uncertainty estimates that lead to poorer empirical
performance~\citep{RiquelmeTuSn18}. The second step, \emph{optimization},
solves for a reward-optimizing action under the posterior sample. This can
also be prohibitively expensive when the action space is large or
continuous. For example, an advertising platform that matches advertisers to
users at each time period has to solve combinatorial optimization problems
real-time in order to run Thompson sampling~\citep{Mas-ColellWhGr95}.

For typical online platforms, low latency---real-time computational
performance---is critical for user satisfaction and retention.  The
\emph{online} nature of the computation required for Thompson sampling thus
poses a substantive challenge to deploying it in large-scale internet
services.  These challenges are especially pronounced in resource-constrained
mobile applications, a ubiquitous modality for modern internet applications:
as of 2018, an estimated $52.2\%$ of worldwide web traffic was generated by
mobile devices~\citep{Statistica19}. Mobile applications require decisions to
be made in a fast and memory-efficient manner, and on-device decision-making
is important to good user experience in domains such as adaptive video
streaming~\citep{mao2019abr} and social media
ranking~\citep{petrescu2016client}. However, the majority of
internet-connected mobile devices have limited memory, and utilize low-end
processors that are orders of magnitude slower than server-grade
devices~\citep{Bhardwaj19,wu19}. As affordable, compute-limited mobile devices
are increasingly adopted in developing economies~\citep{Ricciardi19}, the
ability to deploy cutting-edge decision algorithms on diverse computing
infrastructure is important for democratization of technology and long term
business growth.

 
Software development cost is another core practical consideration when
implementing contextual bandit algorithms in large-scale online platforms.
Long-term software development cost is commonly referred to as tech debt,
which is incurred when a suboptimal, myopic development plan is followed in
lieu of one that requires (sometimes much) higher initial effort, but less
future work. Avoiding tech debt is critical to a reliable and scalable
service~\citep{Sculley15hidden, RamasubbuKe16, BankerLiRa21}, but contextual
bandit systems are challenging due to their high complexity: they require
temporal feedback loops consisting of different pipelines on exploration, data
logging, policy updates, and deployment~\citep{agarwal2016decision}.  The
\emph{online} nature of the complex numerical routines required by Thompson
sampling significantly exacerbate these practical
difficulties. \emph{Real-time} posterior sampling and action optimization
leads the overall system to be cumbersome and hard to debug, posing challenges
to reliable software development.

\vspace{10pt}
\begin{example}[Video Transcoding]
\label{example:video_transcoding}
As our main real-world application, we study video uploads for large online
platforms. Video is an increasingly popular medium on social networks, but
uploading video is still a technically challenging problem, where limited
bandwidth and compute capacity---particularly problematic on mobile
devices---leads to unsuccessful uploads. When a user requests a video be
uploaded to a social media service, the service must choose the desired video
quality (bitrate) for transcoding the video before uploading.  Video needs to
be optimally transcoded considering quality, and success of file upload. It is
preferable to upload videos at a high quality because it can lead to a better
viewer experience (if the viewer has a sufficiently good network
connection). However, higher quality videos have larger file sizes, making it
more likely to fail to upload: larger files take longer time to upload,
increasing the likelihood that the network connection to fail, or the user to
grow frustrated and cancel the upload.

We are interested in an online platform who wish to make contextual decisions
about how to optimally transcode a video at upload time.  Making such
decisions quickly is critical for user satisfaction; low latency is
particularly important for popular short-form videos uploaded on Tik-Tok,
Snapchat, and Instagram, where videos are captured and uploaded frequently and
in real-time. Although transcoding decisions needs to be made quickly in order
to be responsive and keep the user engaged, most upload requests come from
resource-constrained mobile devices. 
\ifdefined\usemsstyle $\diamond$ \fi
\end{example}
\ifdefined\usemsstyle
\else
\vspace{-5pt}
\fi

\begin{figure}
    \centering
    \includegraphics[width=.8\textwidth]{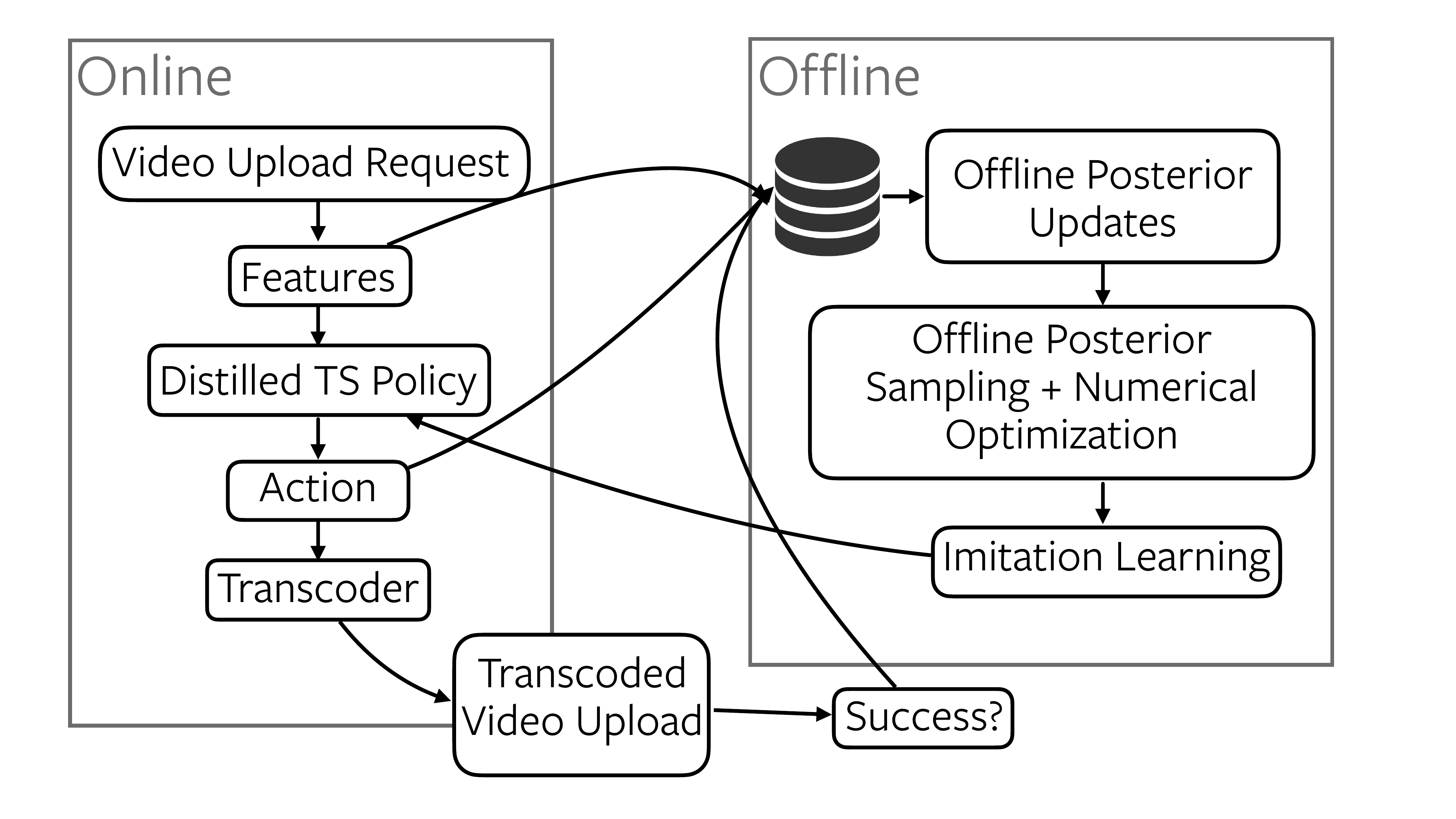}
    \caption{\label{fig:distilled_ts_videos_diagram} An illustration of
      distilled TS on the example video uploads application, described in
      Example \ref{example:video_transcoding}. \emph{Online} action generation
      is performed asynchronously on resource-constrained mobile edge devices
      whereas batched policy updates are performed \emph{offline} on
      powerful backend servers.}
\end{figure}

The problem motivation goes beyond our main application. There are numerous
examples of decision-making problems on online platforms where latency and
system complexity are of central concern.

\begin{example}[Advertising on third party systems] Every time a user arrives
  to a third party webpage (e.g. New York Times), the advertising platform
  (e.g. Google Ads) decides which ad to show in order to maximize conversion.
  Latency is important to good user experience~\citep{agarwal2016decision},
  and curbing system complexity increases service
  reliability~\citep{Sculley15hidden}.  \ifdefined\usemsstyle $\diamond$ \fi
\end{example}
\ifdefined\usemsstyle
\else

\vspace{-10pt}
\fi
\begin{example}[Ranking]
  When a user logs in, an internet service chooses a list of items to display
  to the user in order to maximize revenue or engagement.  Concrete examples
  include ranking news articles (Microsoft Network, MSN), products (online
  marketplaces like Amazon and Airbnb), and content (Facebook and LinkedIn
  feed). In all of these cases, latency is central to user satisfaction, but
  mobile edge devices and front-end servers are resource
  constrained~\citep{agarwal2016decision}. For instance, there has been work from Meta Facebook on performing
  secondary ranking on device to avoid server communication latency and to only
  display content that has been downloaded
  completely~\citep{petrescu2016client}.
  \ifdefined\usemsstyle $\diamond$ \fi
\end{example}
\ifdefined\usemsstyle
\else
\vspace{-10pt}
\fi

\begin{example}[Personalized Pricing]
  As a customer enters a virtual platform, the system generates a personalized
  price based on market conditions and user-specific contexts. Electronic
  commerce firms and airlines use price controls to manage
  revenue~\citep{TalluriVa04, DenBoer15}, and two-sided online marketplaces
  (e.g. Uber, Lyft, Airbnb) dynamically set prices on both sides of the market
  to reduce supply-demand imbalance. In both cases, latency is important for a
  satisfactory user experience.  \ifdefined\usemsstyle $\diamond$ \fi
\end{example}
\ifdefined\usemsstyle
\else
\vspace{-10pt}
\fi

\paragraph{Methodology} 
Motivated by aforementioned challenges in implementing and deploying Thompson
sampling on online platforms, we develop and analyze a method that maintains
an explicit policy representation designed to imitate Thompson sampling.  In
order to avoid computationally demanding routines \emph{online}, our algorithm
simulates and imitates a Thompson sampling policy \emph{offline}.  An explicit
policy representation can efficiently generate actions real-time even in large
action spaces, without requiring real-time posterior inference or numerical
optimization. An illustration of how this methodology can be applied to video
transcoding (Example \ref{example:video_transcoding}) is provided in Figure
\ref{fig:distilled_ts_videos_diagram}. This allows leveraging state-of-the-art
Bayesian models---such as Gaussian processes parameterized by deep neural
networks---and optimization solvers \emph{offline}, while maintaining low
latency on resource-constrained computing modalities such as low-end mobile
devices.\footnote{More generally, optimization can be a challenge for
  non-Bayesian methods. Although outside of the scope of this paper,
  generalizing our imitation framework to other policies will likely yield
  fruit in separating optimization from online action-generation.}  During
operation, actions can be generated efficiently from the distilled policy by
sampling from a parameterized distribution, allowing fast and asynchronous
interaction with users.  For example, recent engineering progress allows
generating actions using an industrial-scale neural network model in 0.3880
milliseconds~\citep{ColemanEtAl17}.

By performing posterior updates and mimicking the behavior of Thompson
sampling offline, we are able to move complex numerical routines from
resource-constrained mobile devices to backend servers, and reduce long-term
software development costs (tech debt).  Such offline procedures using batched
observations can be easily implemented using modern industry machine learning
pipelines~\citep{gauci2018horizon,fujimoto2019ope}. This allows leveraging the
recent remarkable progress in machine learning software infrastructure, such
as engineering best practices and tools for reliable testing \&
deployment\footnote{As an example, a dedicated top peer-reviewed conference
  for ML systems~\url{https://mlsys.org/} was recently established, and is
  undergoing rapid growth at the forefront of academia and industry. This
  community focuses on improving the efficiency of ML systems from an
  \emph{operational} perspective.}.



\paragraph{Practical impact} 

Empirically, we evaluate our imitation algorithm on several benchmark problems
and a real-world dataset for selecting optimal video transcoding
configurations (Section \ref{section:experiments}).  In all of our
experiments, our imitation algorithm performs as well as batch on-policy
Thompson sampling in terms of cumulative regret, while reducing decision-time
latency by an order of magnitude.  Buoyed by low latency and simplicity of
implementation showcased in our empirical benchmarking efforts, our imitation
learning policies have been used in video upload systems across Meta products,
which are leading social networking services. Our contextual policy tunes the
bitrates for video uploads based on contextual features such as download
bandwidth, device model, operating system, connection class (2G, 3G, 4G),
country, and video features which include source resolution, bitrate, and file
size. 

To assess the impact of our algorithm, we ran internal randomized controlled
trials (RCT) on each of the aforementioned products. We find our algorithm
achieves significant improvements in video quality, which we measure using the
fraction of videos with quality preserved at 1080p (high resolution). 
Our RCTs show up to 5x improvements over existing video upload policies on all
surfaces.

The RCTs show significant increases in topline metrics that are of importance
at the company level.  Due to better video quality, we observed increased
video watch times on multiple products: $1.1\%$ on Facebook iOS Feed videos,
$0.77\%$ on Facebook Android Feed videos, $0.27\%$ on Facebook Android
Stories, $0.45\%$ on Instagram Stories. In addition, our contextual policies
boosted interaction metrics on several products: increases in meaningful
social interactions of $0.15\%$ and $0.14\%$ on Facebook Android Stories and
Facebook Android Feed, respectively, and an increase in interactions of
$0.26\%$ on Instagram Stories. All findings were significant at the $95\%$
level.

Buoyed by these results, our contextual policy has been deployed across
multiple product verticals including Facebook Feed, Stories, Reels and
Instagram Stories and Reels.  Our algorithm has been independently applied to
both iOS and Android apps for all aforementioned products, and is reliably
handling millions of uploads each day.

\paragraph{Theoretical contributions} 
To understand the strong practical advantages we showcase, we take initial
steps toward a principled understanding of our imitation algorithm.  Since our
(batch) updates to the Thompson sampling policy are based on observations
generated by the imitation policy, our algorithm emulates an \emph{off-policy}
version of Thompson sampling which may diverge from its on-policy
counterpart. Due to its off-policy nature, an uninformed and pessimistic view
of our procedure states that any initially small deviation between the
imitation policy and Thompson sampling may cascade across time. Our main
theoretical results
(Section~\ref{section:generalization}-\ref{section:imitation-controls-regret})
preclude such possibility and ensure small deviations between the imitation
policy and Thompson sampling do not magnify over time. Specifically, we show
that our imitation policy enjoys Bayes regret similar to that of batch
\emph{on-policy} Thompson sampling, up to the sum of single-step imitation
errors. We substantiate our performance guarantees in general modeling
scenarios involving contextual Gaussian processes, where a cleverly
initialized version of our algorithm (albeit impractical) achieves
advantageous Bayes regret (Section~\ref{section:regret-bound}).

Solving the imitation problem, or equivalently, finding the policy
parameterization closest to Thompson sampling, only requires unsupervised
contexts---those without corresponding actions or rewards.  On large-scale
online platforms, unsupervised contexts are typically cheap and abundant,
e.g., the entire user database provides a wealth of such contexts. In
Section~\ref{section:generalization}, we prove that each single-period
imitation error term can be controlled---with a sufficiently rich imitation
model---at the rate $O_p(1/\sqrt{N})$, where $N$ is the number of supervised
and unsupervised contexts. Combining this with our aforementioned regret bound
in Section~\ref{section:imitation-controls-regret}, our imitation algorithm
achieves Bayes regret comparable to batch on-policy Thompson sampling up to
$O(T \sqrt{\log T} / \sqrt{N})$-error, where $T$ is the number of batched
policy updates.

Despite the seemingly linear gap in Bayes regret, $N$ is typically orders of
magnitude larger than $T$ in internet applications where we can utilize the
database of users / entities. Typically, $N$ is in the order of hundreds of
millions; as of 2020, Facebook had 2.7 billion monthly active users; in our
motivating video transcoding application, the service receives millions of
video upload requests \emph{every day}, providing an effectively unlimited
number of unsupervised contexts. In contrast, the number of model updates
(horizon $T$) is relatively small, in hundreds, due to complexities of policy
deployment and nonstationary user behavior. In such practical problem
instances, our imitation policy thus enjoys Bayes regret bounds comparable to
that of batch on-policy Thompson sampling.

\section{Related work}

There is a substantial body of work on Thompson sampling and its variants that
use computationally efficient subroutines. We give a necessarily abridged
overview of how our algorithm situates with respect to the extensive
literature on bandits, approximate inference, and imitation learning.

A number of authors have showed that Thompson sampling achieves optimal regret
for multi-armed bandits~\citep{AgrawalGo12, AgrawalGo13a, KaufmannKoMu12,
  HondaTa14}. We refer the reader to the recent tutorial
by~\citet{RussoVaKaOsWe18} and references therein for a comprehensive
overview.~\citet{AgrawalGo13b, AbeilleLa17} showed regret bounds for linear
stochastic contextual bandits for a Thompson sampling algorithm with an
uninformative Gaussian prior, and~\citet{GopalanMaMa14} studied finite
parameter spaces.~\citet{RussoVa14c} established Bayesian regret bounds for
Thompson sampling with varying action sets (which includes, in particular,
contextual bandits); ~\citet{RussoVa16} provides an information-theoretic
analysis that makes explicit the dependence on the prior (see
also~\citet{BubeckEl16}). We build on the insights of~\citet{RussoVa14c}, and
show that our imitation algorithm retains the advantageous properties of batch
Thompson sampling, achieving (gap-independent) Bayes regret comparable to the
\emph{best} batch UCB algorithm.


Practical performance of Thompson sampling depends on having access to
well-calibrated probabilistic predictions.  Obtaining a balance between
predictive accuracy, computational time, and memory requirements can be
challenging in the context of large datasets with overparameterized models.
Exact posterior sampling from even the simplest Gaussian linear models has a
time complexity of $O(n^2)$, where $n$ is the number of model
parameters\footnote{This assumes the root decomposition of the covariance
  matrix has been cached, which incurs a cost of $O(n^3)$.}.  A common
strategy used by some variational inference methods is to use a mean-field
approach where parameters are assumed to be independent
\citep{blundell2015weight}. This assumption can decrease sampling costs from
$O(n^2)$ to $O(n)$, where $n$ is the number of parameters. However,
\citet{RiquelmeTuSn18} found that batch Thompson sampling using such
approaches often leads to poor empirical performance.

When exact posterior inference is not possible, approximate inference methods can be
used for posterior sampling. We refer the reader to Chapter 5
of~\citet{RussoVaKaOsWe18}'s recent tutorial for a discussion of approximation
methods in relation to Thompson sampling. Bootstrapping~\citep{eckles2014bts,
  OsbandBlPrVa16, lu2017ensemble} is a simple heuristic procedure that
maintains multiple models to approximate samples from the posterior
distribution, although maintaining multiple models is often computationally
expensive. MCMC-based methods for approximate inference, and Hamilton Monte
Carlo (HMC)~\citep{neal2011hmc} in particular, are largely regarded as the
``gold standard'' for approximate Bayesian inference.  HMC, and other
MCMC-like approaches (e.g., \citet{pmlr-v32-cheni14,welling2011sgld}) generate
an arbitrary number of posterior samples for all parameters.  While such
algorithms permit rapid evaluation of posterior samples (since the parameters
are already sampled), they require substantial memory to store multiple
samples of the parameters.  Recent methods have also considered decomposing the
covariance or precision matrix into a diagonal and low-rank component
\citep{zhang2018noisy,maddox2019swag}.  While this reduces computational
complexity and memory costs relative to using the full covariance, sampling
still incurs a time complexity of $O((n+1)\rho)$ where $\rho$ is the rank of
the covariance (or precision matrix) and $\rho$ copies of the weights must be
stored.

By pre-computing and distilling Thompson sampling, our imitation learning
framework allows the use of the most appropriate inferential procedure for the
task at hand, rather than what is feasible to run in an online setting. In
particular, the separation of online decision-making and offline computation
allows the use of state-of-the-art Bayesian methods, such as those utilizing deep neural
networks~\citep{WangYe20}. While we restrict discussion to Thompson sampling in
this work, the basic idea of offline imitation learning can be used to learn a
explicit policy representation of any complicated policy and allow
operationalization at scale.


Imitation learning methods have received much attention recently, owing to
their ability to learn complicated policies from expert
demonstrations~\citep{AbbeelNg04, RossBa10, HoEr16}.  Our approach of
minimizing the discrepancy between a parameterized policy and Thompson
sampling can be viewed as an implementation of behavioral
cloning~\citep{RossBa10, SyedSc10, RossGoBa11}.  Our imitation learning
procedure resembles the ``Bayesian dark knowledge'' approach from
\citet{Korattikara15}, which uses a neural network to approximate Bayesian
posterior distributions.  While most works in the imitation learning
literature study reinforcement learning problems, we focus on the more limited
contextual bandit setting, which allows us to show strong theoretical
guarantees. We anticipate the growing list of works on imitation learning to
be important in generalizing our imitation framework to the reinforcement
learning (RL) setting. To account for time dependencies in state evolutions,
both inverse RL approaches that directly model the reward~\citep{AbbeelNg04,
  SyedSc08}, and the recent advances in generative adversarial imitation
learning techniques~\citep{HoEr16, LiSoEr17} show promise in generalizing our
imitation algorithm (behavioral cloning) to RL problems.




\section{Distilled Thompson sampling}
\label{section:imitation}


Reflecting typical operational scenarios on online platforms, we consider a
\emph{batch} (Bayesian) contextual bandit problem. The agent / decision-maker
generates actions \emph{real-time} as user requests come in asynchronously, and
performs batched, infrequent updates to the policy. In what follows, we
formally introduce an imitation algorithm that makes it trivial to parallelize
action generation over multiple computing nodes, even on each user's mobile
device.

Let $\Theta$ be the parameter space, and let $\theta \sim \prior$ be a prior
distribution on $\Theta$. At each time $t$, the agent observes a context,
takes an action, and receives a reward: we denote the context
$S_t \overset{\text{iid}}{\sim} \P_S$, action $A_t \in \actionset$, and reward
$R_t \in \R$. We consider a well-specified reward model class
$\{f_{\theta}: \actionset \times \mc{S} \to \R \mid \theta \in \Theta\}$
\begin{equation*}
  f_{\theta}(a, s) = \E[R_t \mid \theta, A_t = a, S_t = s]~~\mbox{for all}~~a
  \in \actionset, s \in \mc{S}.
\end{equation*}
Let $H_t = (S_1, A_1, R_1, \ldots, S_{t-1}, A_{t-1}, R_{t-1})$ be the history
of observations until time $t$. Assume that regardless of $H_{t'}$ for
$t'\le t$, the mean reward at time $t$ is determined only by the
context-action pair
\begin{align*}
  \E[R_t \mid \theta, H_{t'}, S_t = s, A_t = s] = f_{\theta}(a, s),
\end{align*}
or equivalently, $R_t = f_{\theta}(A_t, S_t) + \epsilon_t$ where $\epsilon_t$
is a mean zero i.i.d. noise.

At time $t$, we denote by $\prev(t)$ the period before which the most recent
policy update occurred. For example, for a fixed batch size $\batch$
\begin{align}
  \label{eqn:batch-size}
  \prev(t) = \begin{cases}
    1 & \mbox{if}~t = 1, \ldots \batch, \\
    \batch+1 & \mbox{if}~t = \batch+1, \ldots, 2\batch, \\
    2\batch+1 & \mbox{if}~t = 2\batch+1, \ldots, 3\batch, \\
    \vdots &
   \end{cases}.
\end{align}
More generally, we allow time-varying batch sizes that are a priori unknown to
the decision maker. We use $\pi_{\prev(t)}$ to denote the policy used at time
$t$ that generates action $A_t$ based on the history $H_{\prev(t)}$ available
at the previous model update $\prev(t)$: conditional on the history
$H_{\prev(t)}$, we have $A_t \mid S_t \sim \pi_{\prev(t)}(\cdot \mid S_t)$,
where we abuse notation to suppress the dependence of $\pi_{\prev(t)}$ on the
history $H_{\prev(t)}$. In the sequential (non-batch) setting, we simply have
$\prev(t) = t$.

The agent's objective is to maximize the cumulative sum of rewards by updating
the policy $\pi_{\prev(t)}$ based on batches of context-action-reward
observations. The \emph{regret} of the agent compares the agent's cumulative
reward to the reward under the optimal action: for any fixed parameter value
$\theta \in \Theta$, the (frequentist) regret for the set of policies
$\{\pi_{\prev(t)}\}_{t \in \N}$ is
\begin{equation*}
  \regret{T}{\{\pi_{\prev(t)}\}_{t \in \N}}{\theta} 
  \defeq
  \sum_{t=1}^T \E\left[ \max_{a \in \actionset} f_\theta(a, S_t) -
    f_\theta(A_t, S_t) \mid \theta \right].
\end{equation*}
For simplicity, we assume $\argmax_{a \in \actionset} f_{\theta}(a, s)$ is
nonempty almost surely.
We assume the agent's prior, $\prior$, is \emph{well-specified}\footnote{ When
  the prior is misspecified so that the Thompson sampling policy uses $Q$
  instead of $P$, we have the equivalence as noted by~\citet{RussoVa14c}
\begin{align*}
  \E_{\theta \sim \prior} [\regret{T}{\{\pi_{\prev(t)}\}_{t \in \N}}{\theta}]
  \le \linfstatnorm{\frac{dP}{dQ}} \E_{\theta \sim Q} [\regret{T}{\{\pi_{\prev(t)}\}_{t \in \N}}{\theta}],
\end{align*}
where $dP/dQ$ is the Radon-Nikodym derivative of $P$ with respect to
$Q$. While misspecified priors can incur substantially higher
regret~\citep{LiuLi16} in the worst-case, empirical evidence suggests Thompson
sampling is a strong algorithm in practice~\citep{Scott10, Granmo10,
  ChapelleLi11, MayLe11, FerreiraSiWa18, GraepelCaBoHe10, AgarwalLoTrXiZh14,
  KawaleBuKvTrCh15, SchwartzBrFa17, agarwal2016decision}. }, a key
(standard) assumption that drives our subsequent analysis. Under the prior
$\prior$ over $\theta \in \Theta$, the Bayes regret is simply the frequentist
regret averaged over $\theta \sim \prior$
\begin{equation*}
  \bayesregret{T}{\{\pi_{\prev(t)}\}_{t \in \N}}
  \defeq \E_{\theta \sim \prior} [\regret{T}{\{\pi_{\prev(t)}\}_{t \in \N}}{\theta}] 
  = \sum_{t=1}^T \E_{\theta \sim \prior}\left[ \max_{a \in \actionset} f_\theta(a, S_t) - f_\theta(A_t, S_t) \right].
\end{equation*}

Based on the history $H_{\prev(t)}$, batch Thompson sampling plays an action
according to the posterior probability of the action being optimal. The
posterior probabilities are computed based on the prior $\prior$ and
previously observed context-action-reward tuples. At time $t$, this is often
implemented by 
\begin{equation*}
  \mbox{sampling from the posterior}~~
  \theta_t \sim \prior(\theta \in \cdot \mid H_{\prev(t)}, S_t)~~\mbox{and solving}~~
  \bar{A}_t \in \argmax_{a \in \actionset} f_{\theta_t}(a, S_t).
\end{equation*}
By definition, Thompson sampling enjoys the optimality property
$\bar{A}_t \mid H_{\prev(t)}, S_t \eqd A_t\opt \mid H_{\prev(t)}, S_t$ where
$A_t\opt \in \argmax_{a \in \actionset} f_{\theta}(a, S_t)$ and $\theta$ is
the true parameter drawn from the prior $\prior$. Throughout, we assume
$\bar{A}_t \mid H_{\prev(t)}, S_t$ is independent of all else.


To address challenges in implementing Thompson sampling real-time, we develop
an imitation learning algorithm that separates \emph{online} action generation
from computationally intensive steps like posterior sampling and
optimization. Our algorithm maintains an explicit policy representation that
emulates the batch (off-policy) Thompson sampling policy by simulating its
actions \emph{offline}. At decision time, the algorithm generates an action
simply by sampling from the current policy representation, which is
straightforward to implement and computationally efficient to run real-time.
We summarize an idealized form of our method in Algorithm~\ref{alg:imitation},
where conditional on the history $H_{\prev(t)}$ generated by the imitation
policy
\begin{equation}
  \label{eqn:bots}
  \bar{\pi}_{\prev(t)}( a\mid s)~~\mbox{is the \emph{batch off-policy Thompson sampling policy at time}}~t.
\end{equation}
This policy is different from the true, batch on-policy Thompson sampling
since the imitation policy generates actions based on which rewards are
observed. Nevertheless, we will show that our algorithm enjoys Bayes regret
comparable to batch on-policy Thompson sampling.

At each time $t$, our algorithm observes a context $S_t$, and plays an action
drawn from its explicit policy representation. Formally, we parameterize our
policy $\pi^m(a \mid s)$ with a model class $m \in \mc{M}$. For example,
$\mc{M}$ can be a neural network that takes as input a context and outputs a
distribution over actions. We generate actions by sampling from the current
policy $A_t \sim \pi_{\prev(t)}^m(\cdot \mid S_t)$, which can be easily
implemented to run with low latency on resource-constrained computing
infrastructure such as mobile devices.  The agent uses a batch of
context-action-reward tuples to update its posterior on the parameter
$\theta \in \Theta$ \emph{offline}.  Although this step requires posterior
inference that may be too burdensome to run real-time, our method allows
running it offline on a different computing node, so that it does not affect
latency. Using the updated posterior
$\theta_t \sim \P(\cdot \mid H_{\prev(t)})$, the agent then simulates actions
drawn by the Thompson sampling policy by computing the maximizer
$\bar{A}_t(s) \in \argmax_{a \in \mc{A}} f_{\theta_t}(a, s)$, for a range of
values $s \in \mc{S}$. Using these simulated context-action pairs, we learn an
explicit policy representation that \emph{imitates} the observed actions of
the Thompson sampling policy.

\begin{algorithm}[H]
  \caption{Imitating Batch Thompson Sampling}
  \label{alg:imitation}
  \begin{algorithmic}[1]
    \STATE Input: prior $\prior$ on parameter space $\Theta$, reward model
    class $\{f_{\theta}(\cdot, \cdot)\}$, imitation policy model class
    $\{\pi^m: m \in \mc{M}\}$, notion of distance $D$ for probabilities
    \STATE Initialize
    $m \gets \argmin_{m \in \mc{M}} \E_{S \sim \P_S}
    [\disc{\bar{\pi}_0}{\pi^{m}}{S}]$
    \FOR{$t=1$ \textbf{to} $T$}
    \STATE Observe $S_t$, sample $A_t \sim \pi^{m}_{\prev(t)}(\cdot \mid S_t)$, receive $R_t$
    \IF{$t+1 = \prev(t+1)$} \STATE Update model
    $m \gets \argmin_{m \in \mc{M}} \E_{S \sim \P_{S}}
    [\disc{\bar{\pi}_{\prev(t+1)}}{\pi^{m}}{S}]$ \emph{offline} \ENDIF
    \ENDFOR
  \end{algorithmic}
\end{algorithm}

Dropping the time subscript to simplify notation, the imitation learning
problem
\begin{equation}
  \label{eqn:imitation}
  \minimize_{m \in \mc{M}} \E_{S \sim \P_S}
  \left[\disc{\bar{\pi}}{\pi^m}{S}\right].
\end{equation}
learns a model $m \in \mc{M}$ minimizing a measure of discrepancy
$\disc{\cdot}{\cdot}{S}$ between the two distributions on $\actionset$,
conditional on the context $S$. As the imitation
objective~\eqref{eqn:imitation} cannot be computed analytically, we provide
efficient approximation algorithms. To instantiate
Algorithm~\ref{alg:imitation}, we fix Kullback-Leibler (KL) divergence as the
notion of discrepancy between probabilities and present finite-sample
approximations based on observed contexts and simulated actions from the
off-policy Thompson sampling policy $\bar{\pi}_t$.
For probabilities $q^1$ and $q^2$ on $\actionset$ such that
$q^1, q^2 \ll \nu$ for some $\sigma$-finite measure $\nu$ on $\actionset$,
the KL divergence between $q^1$ and $q^2$ is
  $\dkl{q^1}{q^2} \defeq
  \int_{\actionset} \log \frac{dq^1/d\nu}{dq^2/d\nu}(a) d\nu(a)$,
where we use $\frac{dq^1}{d\nu}$ and $\frac{dq^2}{d\nu}$ to denote Radon-Nikodym
derivatives of $q^1$ and $q^2$ with respect to $\nu$. 
For two policies $\pi^1$ and $\pi^2$, we define
\begin{equation*}
  \dklpolicy{\pi^1}{\pi^2}{S} \defeq \dkl{\pi^1(\cdot \mid S)}{\pi^2(\cdot \mid S)},
\end{equation*}
where we use $\pi^1, \pi^2$ to also denote their conditional densities over
$\actionset$.

The imitation problem~\eqref{eqn:imitation} with
$\disc{\cdot}{\cdot}{S} = \dklpolicy{\cdot}{\cdot}{S}$ is equivalent to
maximizing log likelihood
\begin{equation}
  \label{eqn:mle}
  \maximize_{m \in \mc{M}}
  \E_{S \sim \P_S, \bar{A} \sim \bar{\pi}(\cdot \mid S)}[\log \pi^m(\bar{A} \mid S)].
\end{equation}
In the following, we write
$\E[\cdot] = \E_{S \sim \P_S, \bar{A} \sim \bar{\pi}(\cdot \mid S)}[\cdot]$
for simplicity. In the maximum likelihood estimation (MLE)
problem~\eqref{eqn:mle}, the data comprises of context-action pairs. First,
contexts are generated under the marginal distribution $S \sim \P_S$
independent of everything else. Conditional on the context, actions are
simulated from the batch off-policy Thompson sampling policy
$\bar{A} \sim \bar{\pi}(\cdot \mid S)$. The MLE problem~\eqref{eqn:mle} finds
a model $m \in \mc{M}$ maximizing the likelihood of observing actions
generated by $\bar{\pi}_{\prev(t)}$.

The imitation objective $m \mapsto \E[\log \pi^m(\bar{A} \mid S)]$ involves an
expectation over the unknown marginal distribution of contexts $\P_S$ and
actions generated by the Thompson sampling policy $\bar{\pi}(\cdot \mid
S)$. Although the expectation over $S \sim \P_S$ involves a potentially
high-dimensional integral over an unknown distribution, sampling from this
distribution is usually very cheap since the observations $S \sim \P_{S}$ can
be ``unsupervised'' in the sense that no corresponding action/reward are
necessary. For example, it is common for online platforms to maintain a
database of features $S$ for all of its users. Using these contexts, we can
solve the MLE problem~\eqref{eqn:mle} efficiently via stochastic gradient
descent methods~\citep{KushnerYi03, Duchi18}. In
Section~\ref{section:generalization}, we show that it is easy to solve the
imitation problem~\eqref{eqn:imitation} to high accuracy by using cheap
unsupervised contexts. In Section~\ref{section:imitation-controls-regret}, we
show that our imitation algorithm enjoys Bayes regret comparable to that of
the batch on-policy Thompson sampling algorithm, up to the sum of single step
imitation errors.

For continuous action spaces with a notion of geometry, it is sometimes
natural to allow imitation policies to have slightly different support than
the Thompson sampling policy. In this scenario, we can instantiate the
abstract form of Algorithm~\ref{alg:imitation} with Wasserstein distances as
our notion of discrepancy $\disc{\cdot}{\cdot}{s}$. The subsequent theoretical
development for KL divergences has its analogue for Wasserstein distances,
which we outline in Appendix~\ref{section:wass}


\section{Empirical evaluation}
\label{section:experiments}

We study the performance of our imitation learning algorithm in terms of
cumulative regret / reward and decision-time latency in a number of
datasets. Our imitation learning algorithm achieves a significant reduction in
latency on all problems and enjoys regret comparable to that of batch
on-policy Thompson sampling, avoiding compounding of imitation error over
time. Our experiments include a real-world video upload transcoding
application for an internet service receiving millions of video upload
requests per day.

\ifdefined\usemsstyle We include our open-sourced
implementation as supplementary materials.  \fi



\paragraph{Datasets} We compare our imitation algorithm alongside an array of benchmark methods on
four problem scenarios. For our first experiment, we study the \textbf{wheel
  bandit problem}, a synthetic problem constructed to require
significant exploration~\citep{RiquelmeTuSn18}. In this two-dimensional
problem, there are 5 actions and rarely seen contexts yield high rewards under
one context-dependent action. We sample $10,000$ contexts for each trial.
Specifically, two-dimensional contexts are sampled in the unit sphere with
uniform probability. The first action always has a
mean reward of $\E[r(\bm s, a_1)] = 1.2$ independent of the context, and the
mean rewards of the other actions depend on the context. If
$||\bm s||_2 \leq \delta$, then the remaining four actions are non-optimal with a
mean reward of 1. If $||\bm s||_2 > \delta$, then one of the remaining actions
is optimal---and determined by the sign of the two dimensions of $\bm s$
---with a mean reward of 50. The remaining three actions all have a mean reward of
1. All rewards are observed with zero-mean additive Gaussian noise with
standard deviation $\sigma = 0.01$. We set $\delta = 0.95$, which means the
probability of sampling a context on the perimeter
($||\bm s||_2 \geq \delta$) where one action yields a large reward is
$1-(0.95)^2 = 0.0975 \approx 10\%$.

For our second problem, we design a contextual bandit problem from a
supervised classification task. The \textbf{Mushroom UCI Dataset} \citep{misc_mushroom_73} contains
8,124 examples with 22 categorical features about the mushroom and labels
indicating if the mushroom is poisonous or not. At each time step, the forager
decides whether to eat the mushroom or not and receives a small positive
reward for eating a safe mushroom, and a large negative reward for eating an
unsafe mushroom. With equal probability, eating a poisonous mushroom lead to
illness ($r=-35$) or it may not harm the consumer ($r=5$), while a
nonpoisonous mushroom always yields a positive reward ($r=5$). The reward for
abstaining is always 0. We sample $50,000$ contexts for each trial.

Next, we turn our attention to a more realistic healthcare scenario,
\textbf{pharamacological dosage optimization}, where we wish to learn a good
dosing policy for Warfarin.  Warfarin is one of the most common anticoagulants
(blood thinner), often prescribed to patients with atrial fibrillation to
prevent strokes~\citep{hai19}. The optimal dosage varies considerably across
genetic, demographic, and clinical differences~\citep{Bastani15}. The Warfarin
dataset \citep{hai19} contains the optimal dosage of Warfarin for $4,788$
patients, which were found via trial and error by physicians. Using a
17-dimensional context vector on patient-specific demographics, medical
history, and genetic markers, we construct a contextual bandit benchmark where
the action space is a uniformly discretized dosage levels, and rewards are
given by absolute deviation from the optimal dosage. We present results for 20
discretized dosage levels, but as we shown in
Section~\ref{section:experiment_details}, we observe even bigger latency gains
for 50 discretized dosage levels. We present results where we reshuffle
contexts for each trial, but again find similar results when $50,000$
contexts are re-sampled each trial.

Finally, we focus on a real-world \textbf{video upload transcoding
  application}, where we study a video upload system for a leading social
network platform receiving millions of upload requests on \emph{mobile
  devices} (see Example~\ref{example:video_transcoding}).  The goal is to
preserve high quality as much as possible while ensuring upload reliability
constraints are met.  We have access to a 38-dimensional context representing
information about the video file (e.g. the raw bitrate, resolution, and file
size) and the network connection (e.g. connection type, download bandwidth,
country). There are 7 actions corresponding to a unique (resolution, bitrate)
pairs. The actions are ranked ordered in terms of quality: action $i$ yields a
video with higher quality than action $j$ if and only if $i \geq j$. If
successful, the reward for a successful upload is a positive and monotonically
increasing function of the action. The reward for a failed upload is 0.

We evaluate the performance of different contextual bandit algorithms using
the unbiased, offline, policy evaluation technique proposed by
\citet{Li11}. The method evaluates a contextual bandit algorithm by performing
rejection sampling on a stream of logged observation tuples of the form
$(S_t, A_t, R_t)$ collected under a uniform random policy. Specifically, the
observed tuple is rejected if the logged action does not match the action
selected by the algorithm being evaluated. Our dataset contains 8 million
observations logged under a uniform random policy. We evaluate each algorithm
using the stream of logged data until each algorithm has ``observed"
$50,000$ \emph{valid} examples.

Our offline evaluation is not meant to suggest offline learning is a valid
substitute for online learning algorithms. The cost of randomization and the
high level of nonstationarity in the system makes online learning algorithms
necessary. We use offline evaluations as an empirically rigorous scientific
benchmark that supports and validates our methodological development.  Our
offline dataset is generated by a particular vertical product, and provided
the empirical evidence needed to invest significant resources in implementing
the algorithm across multiple products.  As the final evaluation, we ran a
randomized controlled study as described in the introduction, and observed
significant improvements in video quality and topline business metrics (watch
time).

\begin{figure*}[t!]
\centering
    \begin{subfigure}[b]{0.5\textwidth}
        \centering
        \includegraphics[width=.8\columnwidth]{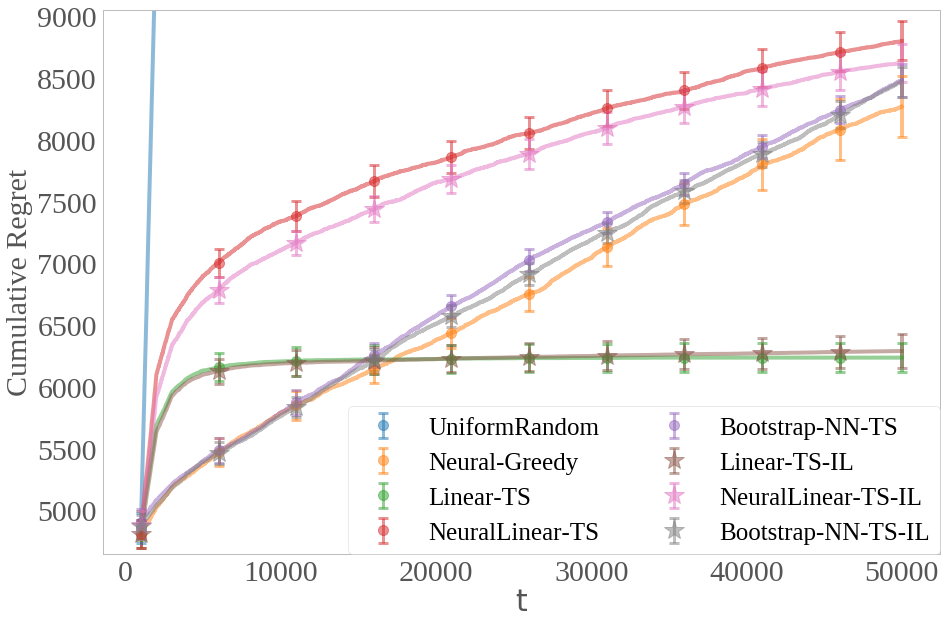}
        \caption{\label{fig:mushroom_regret} Cumulative regret on Mushroom dataset}
    \end{subfigure}%
    ~ 
    \begin{subfigure}[b]{0.5\textwidth}
        \centering
          \includegraphics[width=.8\columnwidth]{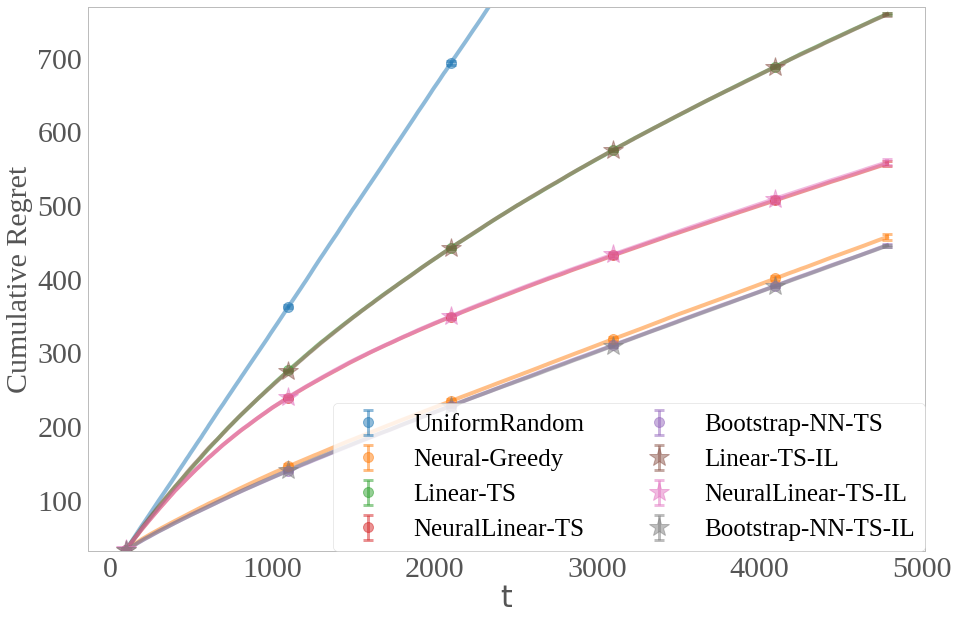}
          \caption{\label{fig:warfarin_regret} Cumulative regret on Warfarin
    dataset}
    \end{subfigure}\\
    \begin{subfigure}[b]{0.5\textwidth}
        \centering
    \includegraphics[width=.8\columnwidth]{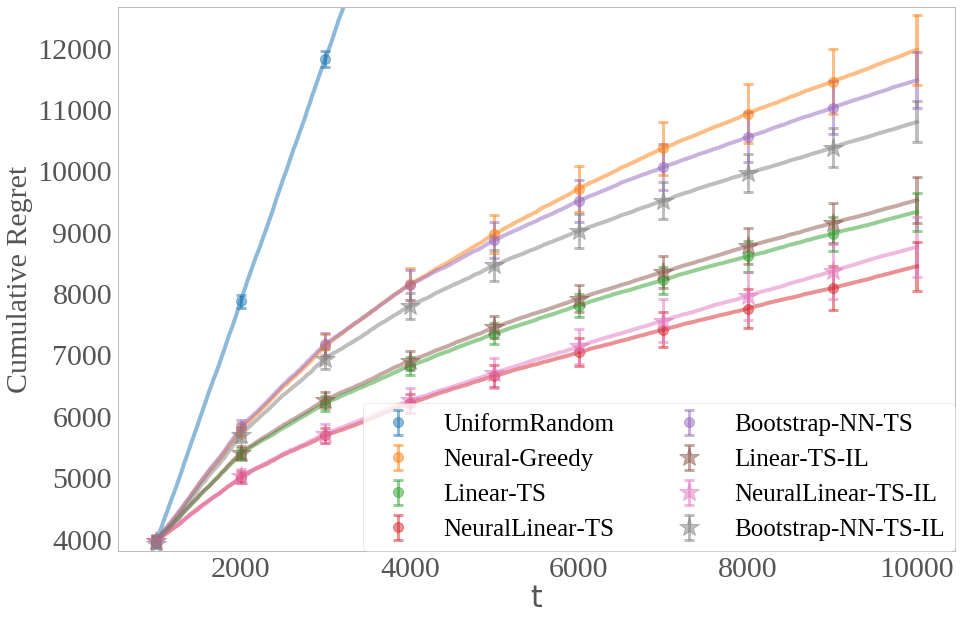}
    \caption{\label{fig:wheel_regret} Cumulative regret on Wheel bandit}
    \end{subfigure}%
    ~
    \begin{subfigure}[b]{0.5\textwidth}
        \centering
    \includegraphics[width=.8\columnwidth]{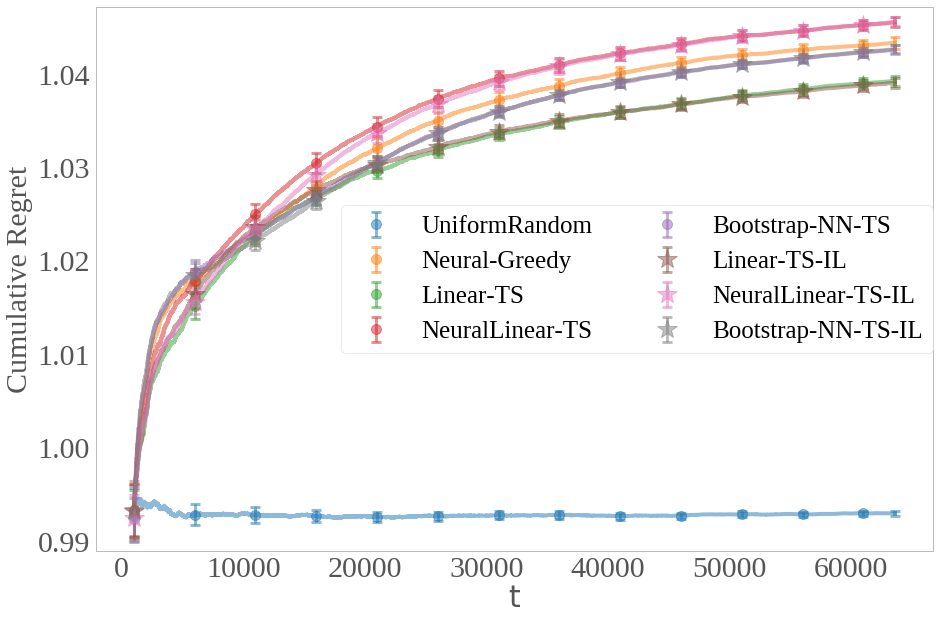}
    \caption{\label{fig:video_uploads_mean_reward} Running average of rewards for video transcoding}
    \end{subfigure}
    \caption{We report mean cumulative regret (or running average of rewards for video transcoding), alongside two standard errors over 50 trials (100 trials  for the Wheel bandit,  due to rarity of large rewards).}
    \label{fig:regret_evaluation}
\end{figure*}

\paragraph{Algorithms and evaluation}
For all experiments, we consider models previously found to perform the best
in a broad range of benchmark problems, as reported by~\citet{RiquelmeTuSn18}
in their extensive empirical experiments.  \textbf{\textsc{Linear-TS}} uses an
exact Bayesian linear regression to model the reward distribution for each
action $a$ independently. This policy evaluates the exact posterior under the
assumption that the data for action $a$ were generated from the linear
function: $r_a = \bm{s}^T \bm{\theta}_a + \varepsilon $ where
$\varepsilon \sim \mathcal N (0, \sigma_a^2)$. For each action, we
independently model the joint distribution,
$P(\bm{\theta}, \sigma^2) = P(\bm{\theta} | \sigma^2)P(\sigma^2)$ as a
normal-inverse-gamma distribution which allows for tractable posterior
inference (see Appendix \ref{subsection:algorithms} for closed form
expressions). \textbf{\textsc{NeuralLinear-TS}} models rewards using a neural
network with two 100-unit hidden layers and ReLU activations, but discards the
last linear layer and uses the last hidden layer $\bm{\phi}(\bm{s})$ as the
feature representation for a Linear-TS policy. The neural network takes the
context as input, and predicts the reward for each action. The parameters of
the neural network are shared for all actions and are learned independently of
the Bayesian linear models.  \textbf{\textsc{Bootstrap-NN-TS}} trains multiple
neural networks on bootstrapped observations and randomly samples a single
network to use for each decision. For all of the aforementioned TS policies,
\textbf{\textsc{TS-IL}} denotes their imitated counterpart. We use a
fully-connected neural network to parameterize the policy $\pi^m$ in the
imitation learning problem~\eqref{eqn:imitation}. The policy representation
has two hidden layers with 100 units each, hyperbolic tangent activations on
the hidden layers, and a soft-max activation on the output layer to predict a
the conditional distribution $P(a|\bm s)$ for all $a \in \mathcal A$. We
compare (batch) Thompson sampling and its imitation counterparts against two
additional benchmarks: a random policy (\textbf{\textsc{UniformRandom}}) and a
greedy policy that uses a feed-froward neural network to model rewards
(\textbf{\textsc{Neural-Greedy}}).

Policies are updated every $1000$ examples (except for the Warfarin problem,
where we use update policies every 100 examples due to the small size of the
dataset) and are initialized using a uniform random policy before the first
batch update. Formally, the mapping $\prev(t)$ is specified in the
definition~\eqref{eqn:batch-size}, with batch size $B = 1000~\mbox{or}~100$.
We detail our hyperparameter choices in
Section~\ref{section:experiment_details}: following extensive evaluations
by~\citet{RiquelmeTuSn18}, we use their proposed settings for Thompson
sampling.

\begin{table*}[t!]
\centering
\caption{\label{table:latency} Decision-making latency in milliseconds. All latency measurements were made on a Intel Xeon
E5-2680 v4 @ 2.40GHz CPU with 32-bit floating point precision. For each
latency measurement, action generation is repeated 100K times and the mean
latency and its 2-standard errors are reported.}
\begin{small}
\begin{sc}
\begin{tabular}{lcccc}
\toprule
& Mushroom & Wheel & Video Transcode & Warfarin\\
\midrule
UniformRandom & $0.040 ~(\pm 0.000)$ & $0.039 ~(\pm 0.000)$ & $0.040 ~(\pm 0.000)$ & $0.040 ~(\pm 0.000)$\\
Neural-Greedy & $0.242 ~(\pm 0.001)$ & $0.228 ~(\pm 0.001)$ & $0.231 ~(\pm 0.001)$ & $0.232 ~(\pm 0.000)$\\
Linear-TS & $0.715 ~(\pm 0.001)$ & $1.142 ~(\pm 0.001)$ & $1.575 ~(\pm 0.002)$ & $3.963 ~(\pm 0.002)$\\
NeuralLinear-TS & $0.826 ~(\pm 0.001)$ & $1.492 ~(\pm 0.001)$ & $1.931 ~(\pm 0.002)$ & $4.814 ~(\pm 0.004)$\\
Bootstrap-NN-TS & $0.235 ~(\pm 0.001)$ & $0.235 ~(\pm 0.001)$ & $0.236 ~(\pm 0.001)$ & $0.226 ~(\pm 0.001)$\\
Linear-TS-IL & $0.184 ~(\pm 0.001)$ & $0.178 ~(\pm 0.000)$ & $0.169 ~(\pm 0.000)$ & $0.175 ~(\pm 0.000)$\\
NeuralLinear-TS-IL & $0.186 ~(\pm 0.000)$ & $0.179 ~(\pm 0.001)$ & $0.169 ~(\pm 0.000)$ & $0.175 ~(\pm 0.000)$\\
Bootstrap-NN-TS-IL & $0.190 ~(\pm 0.001)$ & $0.178 ~(\pm 0.000)$ & $0.175 ~(\pm 0.000)$ & $0.179 ~(\pm 0.001)$\\
\bottomrule
\end{tabular}
\end{sc}
\end{small}
\end{table*}

In Figure~\ref{fig:regret_evaluation}, we show that each
\textsc{\small{TS-IL}} method achieves performance comparable to its
corresponding vanilla \textsc{\small{TS}} algorithm on all benchmark
problems. We evaluate the cumulative performance at time steps along the
entire learning curve, and observe that each \textsc{\small{TS-IL}} policy
consistently matches its corresponding \textsc{\small{TS}} policy over time.

(Approximate) Bayesian inference often requires a substantial amount of
compute and memory.  We evaluate decision-time latency and time complexity for
the specific models being considered, but note that the latency and complexity
may be even greater under inference schemes not considered here. We define
\textit{decision time latency} as the time required for a policy to select an
action when it is queried. While \textsc{Bootstrap-NN-TS} achieves low
prediction latency, it requires storing many replicates of the neural network
and can significantly increase the memory footprint. On low-end mobile devices, such memory requirements can be prohibitive, limiting the applicability of methods based on bootstrapping; our imitation methods offer a practical and effective alternative.

Table \ref{table:latency} shows that the imitation policies
(\textsc{\small{TS-IL}}) have significantly lower decision time latency
compared to \textsc{\small{TS}} algorithms, often by \emph{over an order of
  magnitude} on problems with larger action spaces (Warfarin and video upload
transcoding). This is because generating an action under the vanilla
\textsc{\small{TS}} policies requires drawing a sample from the joint
posterior $P(\bm \theta_a, \sigma_a^2)$ for each of the actions $a$, which is
quadratic with respect to the context dimension for \textsc{\small{Linear-TS}}
or the size of the last hidden layer for \textsc{\small{NeuralLinear-TS}}.  On
the other hand, \textsc{\small{TS-IL}} simply requires a forward propagation
through the policy network and a sample from multinomial sample, both of which
are exceedingly cheap. In Section~\ref{section:time_complexity}, we provide a
detailed discussion of runtime and memory complexity, including those for
alternative model choices.


\section{Imitation controls regret}
\label{section:imitation-controls-regret}

To understand the large practical gains we see in our numerical experiments
and randomized controlled study, we now provide some basic theoretical
analyses.  When the imitation policy generates actions
(Algorithm~\ref{alg:imitation}), the observations used to update the posterior
are different from what the batch Thompson sampling policy would have
generated. In this sense, our imitation algorithm does not emulate the batch
on-policy Thompson sampling policy, but rather simply mimics its
\emph{off-policy} variant where posterior updates are based on the history
generated by the imitation policy. In this section, we show how off-policy
imitation is sufficient to achieve Bayes regret bounds available for batch
\emph{on-policy} Thompson sampling~\citep{RussoVa14c}, up to the sum of
single-step imitation errors. In particular, our results guard against
potential exponential compounding of errors that stem from imitating the
off-policy variant of batch Thompson sampling.

We show that minimizing the KL divergence~\eqref{eqn:imitation} controls the
Bayes regret of the imitation algorithm, justifying the the imitation learning
loss~\eqref{eqn:imitation} as a valid objective.  First, we relate the
performance of our imitation policy with that of the batch off-policy Thompson
sampler~\eqref{eqn:bots} and show batch off-policy Thompson sampling admits a
Bayes regret decomposition similar to that for on-policy Thompson sampling
(Section~\ref{section:regret-decomposition}).  Building on this observation,
we use similar proof techniques for proving Bayes regret bounds on batch
on-policy Thompson sampling to provide guarantees for our imitation policy. We
substantiate our results in scenarios where batch Thompson sampling is known
to provide strong regret bounds (Section~\ref{section:regret-bound}).

\subsection{Regret decomposition}
\label{section:regret-decomposition}

Since our imitation learning problem~\eqref{eqn:imitation} approximates batch
\emph{off-policy} Thompson sampling, a pessimistic view is that any small
deviation between the imitation and Thompson sampling policy can exacerbate
over time. A suboptimal sequence of actions taken by the imitation policy may
deteriorate the performance of the batch off-policy Thompson sampling
policy~\eqref{eqn:bots} updated based on this data, compared to its on-policy
counterpart updated based on data collected by itself. Since the imitation
policy again mimics this batch off-policy Thompson sampler, this may lead to a
negative feedback loop in the worst-case.  Our analysis precludes such
negative cascades when outcomes are averaged over the prior $\prior$: the
Bayes regret of the imitation policy is comparable to that of the best batch
UCB algorithm, up to only the sum of expected discrepancy between the batch
off-policy Thompson sampling policy and the imitation learner at each period.
In particular, imitation error at each period does not affect the Bayes regret
linearly in $T$ as our worst-case intuition suggests, but rather only as a
one-time approximation cost. The single-period imitation error can be
controlled using cheap unsupervised contexts as we demonstrated in
Section~\ref{section:generalization}.

The Bayes regret suffered under the batch off-policy Thompson sampler is a
counterfactual quantity as only the imitation policy interacts with the
environment. Nevertheless, the fictitious quantity serves an important role in
our analysis. Our starting point is that an batch off-policy Thompson sampler
enjoys a Bayes regret decomposition similar to \emph{sequential, on-policy}
Thompson sampling. Since the off-policy nature of the policy does not affect
the Bayes regret decomposition, we are able to bound the Bayes regret of the
batch off-policy Thompson sampler using proof techniques developed for
\emph{sequential, on-policy} Thompson sampling~\citep{RussoVa14c}.



Before giving a formal result, we first summarize our approach, which builds
on the insights of~\citet{RussoVa14c}. We connect the performance of our
imitation policy to that of batch off-policy Thompson sampling and in turn
relate the latter method's Bayes regret to that of the \emph{best} batch UCB
algorithm. Since a similar approach also provides Bayes regret bounds for
batch on-policy Thompson sampling, our imitation policy enjoys comparable
Bayes regret, up to the sum of single-period imitation errors. Let
$U_{t}(\cdot; H_{\prev(t)}, S_t): \actionset \to \R$ be a sequence of batch
upper confidence bounds, constructed using only data collected until the most
recent batch $H_{\prev(t)}$. Let $A_t^{\rm BUCB}$ be the action taken by the
batch UCB policy (BUCB)
\begin{equation*}
  A_t^{\rm BUCB} \in \argmax_{a \in \actionset} ~U_t(a; H_{\prev(t)}, S_t).
\end{equation*}
Recalling the optimal action
$A_t\opt \in \argmax_{a \in \mc{A}} f_{\theta}(a, S_t)$, a typical argument
for bounding the regret of a BUCB algorithm proceeds by noting that since
$U_t(A_t^{\rm BUCB}; H_{\prev(t)}, S_t) \ge U_t(A_t\opt; H_{\prev(t)}, S_t)$,
\begin{align*}
  f_\theta(A_t\opt, S_t) - f_{\theta}(A_t^{\rm BUCB}, S_t) 
  \le  f_{\theta}(A_t\opt, S_t) - U_t(A_t\opt; H_{\prev(t)}, S_t) 
   + U_t(A_t^{\rm BUCB}; H_{\prev(t)}, S_t) - f_{\theta}(A_t^{\rm BUCB}, S_t).
\end{align*}
Taking expectations and summing over $t =1, \ldots, T$,
$\bayesregret{T}{\{\pi^{\rm BUCB}_t\}_{ t \in \N}}$ is bounded by
\begin{align*}
  \sum_{t=1}^T \E[f_{\theta}(A_t\opt, S_t) - U_t(A_t\opt; H_{\prev(t}, S_t)] 
  + \sum_{t=1}^T \E[U_t(A_t^{\rm BUCB}; H_{\prev(t)}, S_t)
    - f_{\theta}(A_t^{\rm BUCB}, S_t)].
\end{align*}
If the upper confidence bound property holds uniformly over the actions so
that $U_t(a; H_{\prev(t)}, S_t) \ge f_{\theta}(a, S_t)$ for all
$a \in \actionset$ with high probability, the first term in the above regret
decomposition can be seen to be nonpositive.  To bound the second term, a
canonical proof notes each upper confidence bound is not too far away from the
population mean $f_{\theta}(A_t^{\rm BUCB}, S_t)$.~\citet{RussoVa14c}'s key
insight was that (sequential on-policy) Thompson sampling admits an analagous
Bayes regret decomposition as above, but with respect to \emph{any} UCB
sequence. This allows leveraging arguments that bound the (frequentist) regret
of a UCB algorithm to bound the Bayes regret of Thompson sampling. Since the
Bayes regret decomposition for Thompson sampling holds for \emph{any} UCB
sequence, the performance of Thompson sampling enjoys Bayes regret guarantees
of the best UCB algorithm.

By connecting the performance of our imitation policy to that of \emph{batch
  off-policy Thompson sampling}, we show that a similar Bayes regret
decomposition can be leveraged despite its off-policy nature. Recall that we
denote $\bar{A}_t \sim \bar{\pi}_{\prev(t)}(\cdot \mid S_t)$, the action
generated by the batch off-policy Thompson sampler.  See
Section~\ref{section:proof-imitation-controls-decomposition} for the proof of
the following result. 
\begin{lemma}
  \label{lemma:imitation-controls-decomposition}
  Let $\{\pi_{\prev(t)}\}_{t \in \N}$ and $U_t(\cdot; H_{\prev(t)}, S_t)$ be
  any sequence of batch policies and UCBs (adapted to the history
  $H_{\prev(t)}$). If
  $\E[\sup_{a \in \mc{A}} f_{\theta}(a, S)^2] \eqdef L^2 < \infty$,
  {\small
  \begin{align}
     \bayesregret{T}{\{\pi_{\prev(t)}\}_{t \in \N}}
      & \le \underbrace{\sum_{t=1}^T  \E[f_{\theta}(A_t\opt, S_t) - U_t(A_t\opt; H_{\prev(t)}, S_t)] 
      + \sum_{t=1}^T \E[U_t(\bar{A}_t; H_{\prev(t)}, S_t) - f_{\theta}(\bar{A}_t, S_t)]}_{\small \mbox{(a): regret decomposition for any batch UCB algorithm}}  \nonumber \\
    & \qquad + \underbrace{L \sum_{t=1}^T \sqrt{\half \E\left[
      \dklpolicy{\bar{\pi}_{\prev(t)}}{\pi_{\prev(t)}}{S_t} \right]}}_{\small \mbox{(b): imitation error}}.
       \label{eqn:regret-decomposition}
  \end{align}
  }
\end{lemma}

The Bayes regret decomposition~\eqref{eqn:regret-decomposition} shows that
performance analysis of any batch UCB algorithm can characterize the regret of
our imitation policy. In this sense, the imitation policy achieves regret
comparable to the \emph{optimal} batch UCB algorithm, up to the sum of
single-period imitation errors. As we detail shortly in a general modeling
scenario based on contextual Gaussian processes, term (a) can be bounded using
canonical batch UCB proofs.  Term (b) can be controlled by our imitation
learning algorithm (Algorithm~\ref{alg:imitation}) and its empirical
approximation as seen in Section~\ref{section:generalization}. Although this
term scales as $O(T/\sqrt{N})$, we argue that the seemingly linear dependence
on $T$ is not of material concern. In large-scale internet applications, the
number of unsupervised contexts $N$ is very large as they can simply be read
off of a database of user information ($N\approx 10-100M$). The number of
policy updates $T$ is often orders of magnitude smaller (hundreds) in a
typical product lifecycle due to operational challenges in deploying a policy.
Thus, the term (b) can be made relatively small using big datasets and
powerful overparameterized imitation models using the results in
Section~\ref{section:generalization}.

The fact that we are studying Bayes regret, as opposed to the frequentist
regret, plays an important role in the above decomposition. We conjecture that
in the worst-case, imitation error at any period (and consequently suboptimal
exploration) can each linearly compound over time, leading to a prohibitive
quadratic dependence on $T$.  It remains open whether specific problem
structures can provably preclude such negative feedback loops uniformly over
$\theta$.




\subsection{Regret bounds for contextual Gaussian processes}
\label{section:regret-bound}

We now show concrete performance guarantees for our imitation algorithm by
using instance-independent (gap-independent) Bayes regret bounds for batch
off-policy Thompson sampling. Despite its counterfactual nature, the
decomposition~\eqref{eqn:regret-decomposition} enables us to control it using
identical proof techniques for controlling the Bayes regret of batch
\emph{on-policy} Thompson sampling. This program allows us control over the
term $(a)$ in the decomposition~\eqref{eqn:regret-decomposition}.

We consider a general setting where the mean reward function
$(a, s) \mapsto f_{\theta}(a, s)$ can be modeled as a sample path of a
Gaussian process, with potentially continuous action and context spaces.
Formally,
we assume that $(a, s) \mapsto f_{\theta}(a, s)$ is sampled from a Gaussian
process on $\mc{A} \times \mc{S}$ with mean function $\mu(a, s)$ and
covariance function (kernel)
\begin{equation*}
  \Sigma((a, s), (a',s')) \defeq
  \E[(f_{\theta}(a, s) - \mu(a, s)) (f_{\theta}(a', s') - \mu(a', s'))].
\end{equation*}
We assume that the decision maker observes rewards
\begin{equation*}
  R_t = f_{\theta}(A_t, S_t) + \epsilon_t,
\end{equation*}
where the noise $\epsilon_t \simiid N(0, \sigma^2)$ are independent of
everything else.  Given these rewards, we are interested in optimizing the
function $a \mapsto f_{\theta}(a, S_t)$ for each observed context $S_t$ at
time $t$.

Modeling mean rewards as a Gaussian process is advantageous since we can
utilize analytic formulae to update the posterior at each step.  Since
$f_{\theta}(a, s)$ follows a Gaussian process, its posterior is also a
Gaussian process with mean and variance is given by
\begin{align*}
  \mu_t(a, s)
  & \defeq \E[f_{\theta}(a, s) \mid H_t] = \Sigma_t(a, s)^\top (K_t + \sigma^2 I)^{-1} \vec{R}_t, \\
  \sigma_t^2(a, s)
  & \defeq \var(f_{\theta}(a, s) \mid H_t)
    = \Sigma((a, s), (a, s)) - \Sigma_t(a, s)^\top (K_t + \sigma^2 I)^{-1} \Sigma_t(a, s)
\end{align*}
where $\Sigma_t(a, s) \defeq [\Sigma((A_j, S_j), (a, s))]_{1 \le j \le t-1}$,
$K_t \defeq [k((A_i, S_i), (A_j, S_j))]_{1\le i, j\le t-1}$ and
$\vec{R}_t = [R_j]_{1\le j \le t-1}$.  For large-scale applications, we can
parameterize our kernels by a neural network and leverage the recently
developed interpolations techniques to perform offline posterior
updates~\citep{WilsonNi15, WilsonDaNi15, WilsonHuSaXi16}.

We leverage regret bound techniques for batch UCB
algorithms~\citep{DesautelsKrBu14} to bound the term $(a)$ in the Bayes regret
decomposition~\eqref{eqn:regret-decomposition}. This term is controlled by the
maximal amount of information on the optimal action that can be gained after
$T$ time steps. Recall the definition of (conditional) mutual information
between two random vectors
\begin{equation*}
  I(Z, Y) \defeq \dkl{P_{Z, Y}}{P_Z \times P_Y}
  ~~\mbox{and}~~I(Z, Y \mid W) \defeq \dkl{P_{Z, Y \mid W}}{P_{Z \mid W} \times P_{Y\mid W}}
\end{equation*}
We define the maximal possible
information gain after $T$ time steps as
\begin{equation*}
  \gamma_T \defeq \sup_{\mc{X} \subseteq \mc{A} \times \mc{S}: |\mc{X}| = T}
  I(\vec{R}_{\mc{X}}, f_{\mc{X}})
\end{equation*}
where $\vec{R}_{\mc{X}} = \{f_{\theta}(x) + \epsilon_x\}_{x \in \mc{X}}$ and
$f_{\mc{X}} = \{f_{\theta}(x)\}_{x \in \mc{X}}$.  For popular Gaussian and Matern
kernels,~\citet{SrinivasKrKaSe12} has shown that the maximal information gain
can be bounded explicitly; we summarize these bounds shortly.

Due to the batched nature of Algorithm~\ref{alg:imitation}, we further need to
control the maximal information gain in a single batch, assuming that the
(time-varying) batch size is uniformly bounded by some constant $\batch$.
\begin{assumption}
  \label{assumption:batch-info}
  Let $\prev(t+1) - \prev(t) \le \batch$ for $1 \le t \le T$ and let
  $\eta_{\batch}$ be a constant satisfying
  \begin{equation}
    \max_{\mc{X} \subseteq \mc{A} \times \mc{S}: |\mc{X}| \le \batch}
    I\left(\vec{R}_{\mc{X}}, f_{\mc{X}} \mid \vec{R}_{\prev(t)}\right)
    \le \half \log(\eta_{\batch})
    ~~\mbox{for all}~ 1 \le t \le T
  \end{equation}
  where $\vec{R}_{\mc{X}} = \{f_{\theta}(x) + \epsilon_x\}_{x \in \mc{X}}$,
  $\vec{R}_{\prev(t)} = \{ R_1, \ldots, R_{\prev(t)-1}\}$, and
  $f_{\mc{X}} = \{f_{\theta}(x)\}_{x \in \mc{X}}$.
\end{assumption}

For a compact action space $\mc{A} \subset \R^d$, term $(a)$ in the
decomposition~\eqref{eqn:regret-decomposition} is bounded by
$O\left(\sqrt{d \eta_{\batch} \gamma_t T (\log T)^{d}}\right)$.  Our proof
relies on the batch upper confidence bound
\begin{equation}
  \label{eqn:ucb}
    U_t(a; H_{\prev(t)}, s) \defeq
  \mu_{\prev(t)}(a, s) + \sqrt{\beta_t} \sigma_{\prev(t)}(a, s)
  ~~\mbox{where}~~\beta_t = 2 \log((T^4 rd)^d T^2)
\end{equation}
We use $L_f$ to denote the (random) Lipschitz constant of
the map $a \mapsto f_{\theta}(a, s)$
\begin{equation*}
  L_f \defeq \sup_{s \in \mc{S}} \sup_{a, a' \in \mc{A}}
  \frac{|f_{\theta}(a, s) - f_{\theta}(a', s)|}{\norm{a - a'}_1}.
\end{equation*}
Standard
arguments from Gaussian process theory show $\E[L_f^2] < \infty$ holds
whenever $\mu(\cdot)$ and $\Sigma(\cdot, \cdot)$ are 4 times continuously
differentiable~\cite[Theorem 5]{GhosalRo06}.
\begin{theorem}
  \label{theorem:gp}
  For $\mc{A} \subseteq [0, r]^d$ for some $r > 0$, let
  Assumption~\ref{assumption:batch-info} hold. Assume that
  \begin{equation*}
    c_1 \defeq \sup_{a \in \mc{A}, s \in \mc{S}} |\mu(a, s)| < \infty,
    ~~~~c_2 \defeq \sup_{a, a' \in \mc{A}, s, s' \in \mc{S}} \Sigma((a, s), (a', s')) < \infty,
  \end{equation*}
  and let
  $L^2 \defeq \E\left[ \sup_{a \in \mc{A}, s\in \mc{S}} f_{\theta}(a,
    s)^2\right]$ as before.
  If $\E[L_f^2] < \infty$, there is a universal
  constant $C > 1$ such that
  \begin{align*}
    & \bayesregret{T}{\pi}
      \le C \E[L_f] + C c_2
      + C d \log (rd) \left(c_1 \sqrt{\E[L_f] } + c_3 \sqrt{\E[L_f^2]}\right) \\
    &  \qquad \qquad  + \left(T \eta_{\batch}  \gamma_T \frac{d \log T + d\log rd}{\log(1+\sigma^{-2})} \right)^{1/2} 
      + (L + \sqrt{c_2 \beta_T}) \sum_{t=1}^T \sqrt{\half
      \E\left[ \dklpolicy{\bar{\pi}_{\prev(t)}}{\pi_{\prev(t)}}{S_t} \right]}.
  \end{align*}
\end{theorem}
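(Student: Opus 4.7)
The plan is to instantiate Lemma~\ref{lemma:imitation-controls-decomposition} with the GP-UCB confidence sequence of~\citet{SrinivasKrKaSe12} so that the first two terms in the regret decomposition reduce to the standard GP-UCB regret bound, while the third term becomes the imitation penalty in the theorem. I would use
\[
U_t(a; H_t, S_t) \defeq \mu_t(a, S_t) + \beta_t^{1/2}\sigma_t(a, S_t),
\]
where $\mu_t$ and $\sigma_t^2$ are the GP posterior mean and variance given $H_t$, and $\beta_t = O(d\log(rdt))$ is chosen in conjunction with a grid discretization $\mathcal{D}_t \subset [0,r]^d$ of $\mc{A}$ with spacing $\tau_t = 1/t^2$ so that $|\mathcal{D}_t| \le (rt^2)^d$. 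Since the stated lemma is phrased with a single constant $L$ but the natural UCB magnitude grows in $t$, I would apply it with a time-varying $L_t$; this is a routine extension because the Cauchy--Schwarz step in the lemma's proof can be carried out term by term.

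For the first sum $\sum_t \E[f_\theta(A_t\opt, S_t) - U_t(A_t\opt; H_t, S_t)]$, I would follow~\citet{SrinivasKrKaSe12}: on the event that $|f_\theta(a, s) - \mu_t(a, s)| \le \beta_t^{1/2}\sigma_t(a, s)$ holds uniformly over $\mathcal{D}_t$, each summand is controlled by the Lipschitz gap $L_f\tau_t$ between $A_t\opt$ and its nearest grid point, whose summed expectation is $O(\E[L_f])$. The probability of the confidence event failing is summable by Gaussian tail bounds with a union bound over $\mathcal{D}_t$, and Cauchy--Schwarz absorbs the failure-event contribution into the $Cd\log(rd)(c_1\sqrt{\E[L_f]} + c_3\sqrt{\E[L_f^2]})$ term via the boundedness of $\mu$ and the $L^2$ bound $c_3$ on $\sup_{a,s}|f_\theta(a, s)|$. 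For the second sum I would use $U_t(A_t; H_t, S_t) - f_\theta(A_t, S_t) \le \beta_t^{1/2}\sigma_t(A_t, S_t)$ on the confidence event and then apply the information gain inequality $\sum_{t=1}^T \sigma_t^2(A_t, S_t) \le 2\gamma_T/\log(1+\sigma^{-2})$ of~\citet{SrinivasKrKaSe12} together with Cauchy--Schwarz in $t$, recovering the $\sqrt{T\gamma_T d(\log T + \log rd)/\log(1+\sigma^{-2})}$ term.

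For the third sum I need to bound $M_t(H_t, S_t) \defeq \sup_{a \in \mc{A}}|U_t(a; H_t, S_t)|$ in $L^2$. Writing $|U_t| \le |\mu_t| + \beta_t^{1/2}\sigma_t$, the tower property gives $|\mu_t(a, S_t)| \le \E[\sup_{a,s}|f_\theta(a, s)| \mid H_t, S_t]$ and Jensen yields $\E[\sup_a \mu_t(a, S_t)^2] \le c_3^2$, while $\sigma_t^2(a, s) \le k((a,s),(a,s)) \le c_2$ from the prior covariance bound; combined with the choice of $\beta_t$ this produces $\sqrt{\E[M_t^2]} \le c_3 + Cc_2 d\log(rdt)$, matching the coefficient of the KL term. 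The main obstacle will be the discretization step: one must ensure that the uniform UCB failure probability on $\mathcal{D}_t$, together with the Lipschitz bridge to the random optimum $A_t\opt$, yields precisely the $\E[L_f]$ and $\sqrt{\E[L_f^2]}$ dependences in the theorem. This is delicate because $L_f$ is a prior-dependent random quantity, so unlike the frequentist argument of~\citet{SrinivasKrKaSe12} we must keep the failure-event analysis inside the expectation over $\prior$ and apply Cauchy--Schwarz carefully to pull out the $L^1$ and $L^2$ norms of $L_f$.
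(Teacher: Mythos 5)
Your skeleton matches the paper's: the same GP-UCB choice $U_t = \mu_t + \beta_t^{1/2}\sigma_t$ with a polynomially shrinking cover of $\mc{A}$, the information-gain lemma of \citet{SrinivasKrKaSe12} plus Cauchy--Schwarz for the second sum, the bound $\sqrt{\E[M_t^2]} \le c_3 + C c_2 \sqrt{\beta_t}$ for the KL coefficient, and the (correct) observation that the constant $L$ in Lemma~\ref{lemma:imitation-controls-decomposition} must be made time-varying. However, there is a genuine gap in your treatment of the first sum. The quantity to be bounded is $\E[f_\theta(A_t\opt, S_t) - U_t(A_t\opt; H_t, S_t)]$, evaluated at the random optimum $A_t\opt$ itself. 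Your argument controls $f_\theta(A_t\opt) - f_\theta([A_t\opt]_t)$ by $L_f \tau_t$ and $f_\theta([A_t\opt]_t) - U_t([A_t\opt]_t)$ by the confidence event on the grid, but this leaves the third piece $U_t([A_t\opt]_t) - U_t(A_t\opt)$ unaddressed. In the frequentist GP-UCB proof this piece never appears, because one compares to the UCB argmax via $U_t(A_t^{\rm UCB}) \ge U_t([A_t\opt]_t)$; that trick is unavailable here since the action is drawn from the imitation policy, not from $\argmax_a U_t$. So ``following Srinivas et al.'' does not close the argument: you must establish a modulus of continuity for $U_t$ itself. The paper does this in its bound~\eqref{eqn:ucb-lipschitz}: the posterior mean is $\E[L_f \mid H_t]$-Lipschitz, but the posterior standard deviation $a \mapsto \sigma_t(a,s)$ is only H\"older-$1/2$, with modulus involving $\E[L_f(c_1^2 + \sup_{a,s} f_\theta(a,s)^2) \mid H_t]^{1/2}$. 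It is precisely this H\"older-$1/2$ step --- not the failure of the confidence event --- that produces the $c_1\sqrt{\E[L_f]} + c_3\sqrt{\E[L_f^2]}$ term in the theorem; your proposal misattributes that term to the union-bound failure event, which in the paper contributes only the $C c_2$ term (via the exact Gaussian integral $\E[\hinge{f_\theta - U_t} \mid H_t] = \sigma_t e^{-\beta_t/2}/\sqrt{2\pi}$, a cleaner Bayesian alternative to your event-plus-Cauchy--Schwarz route).

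A secondary, smaller point: since $L_f$ is random and $\theta$-dependent, it cannot be folded into the UCB (which must be $(H_t,S_t)$-measurable), so the discretization error must be carried outside the UCB definition and summed in expectation, exactly as the paper does with $\sum_t \E[L_f]/t^4$. Your instinct to ``keep the failure-event analysis inside the expectation over $\prior$'' is right in spirit, but the concrete mechanism you would need is the Lipschitz/H\"older estimate on $U_t$, which is the one nontrivial idea of the proof that your sketch omits.
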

\noindent See Section~\ref{section:proof-gp} for the proof.  

\paragraph{Bounds on $\gamma_T$}
To obtain concrete bounds on the maximal information gain $\gamma_T$, we focus on
the popular and flexible linear, Gaussian and Matern kernels
\begin{align*}
  \Sigma_l(x,x') & \defeq x^\top x', 
  \Sigma_g(x, x')  \defeq \exp\left( - \frac{\norm{x - x'}^2}{2l^2} \right), \\
  \Sigma_m (x, x') & \defeq
                \frac{2^{1-\nu}}{\Gamma(\nu)}
                r^\nu B_{\nu}(r)
                ~~\mbox{where}~r = \frac{\sqrt{2\nu}}{l} \norm{x -x'},
\end{align*}
where we used $B(\cdot)$ and $\Gamma(\cdot)$ to denote the Besel and Gamma
functions respectively. To ease notation, we let $\kappa$ denote the dimension
of the underlying space, and define
\begin{align*}
 \mathfrak{M}(\Sigma_l, T) \defeq \kappa \log T,
 ~~~~\mathfrak{M}(\Sigma_g, T) \defeq (\log T)^{\kappa+1},
  ~~~~\mathfrak{M}(\Sigma_m, T)
  \defeq T^{\frac{\kappa^2+\kappa}{\kappa^2+\kappa + 2\nu}}\log T.
\end{align*}
We have the following bound on $\gamma_T$ for linear, Gaussian, and Matern
kernels; the bound is a direct consequence of~\citet[Theorem 2]{KrauseOn11}
and~\citet[Theorem 5]{SrinivasKrKaSe12}.
\begin{lemma}
  \label{lemma:info-bound}
  Let $\mc{A} \subseteq \R^{d}$ and $\mc{S} \subseteq \R^{d'}$ be convex and
  compact. Let the kernel $\Sigma$ be given by
  $\Sigma((a, s), (a',s')) \defeq \Sigma_A(a, a') + \Sigma_S(s, s')$. Then,
    $\gamma_T = O\left( \mathfrak{M}(\Sigma_A, T) + \mathfrak{M}(\Sigma_S, T) + \log T
    \right)$.
\end{lemma}

\paragraph{Bounds on $\eta_{\batch}$} To control the Bayes regret of batch
off-policy Thompson sampling, it remains to control the per batch information
gain $\eta_{\batch}$.  Our development so far allows us to use techniques
developed for on-policy Thompson sampling to bound this quantity.  A naive
bound for the per batch information gain $\eta_{\batch}$ is
$\eta_{\batch} \le \exp(2\gamma_{\batch})$, which can be prohibitively large
in large batch scenarios. Towards tighter theoretical control, we use a clever
initialization scheme due to~\citet{DesautelsKrBu14}. While we conjecture that
batch Thompson sampling will perform well even without such a careful
initialization scheme, we are unable to theoretically confirm the conjecture
and leave it as future work.

We initialize our algorithm by targeting $T_{\rm init}$ users/contexts who
suffer the highest uncertainty in their reward. Considering the initialization
index set $t \in \{-T_{\rm init} +1, \ldots, 0\}$, the posterior variance does
not depend on previous rewards
\begin{align*}
  \sigma_{t}^2(a, s)
  & \defeq \var(f_{\theta}(a, s) \mid H_t)
    = \Sigma((a, s), (a, s)) - \Sigma_t(a, s)^\top (K_t + \sigma^2 I)^{-1} \Sigma_t(a, s),
\end{align*}
where
$\Sigma_(a, s) \defeq [\Sigma((A_j, S_j), (a, s))]_{-T_{\rm init} + 1 \le j
  \le t-1}$ and
$K_t \defeq [k((A_t, S_t), (A_j, S_j))]_{-T_{\rm init} + 1 \le j \le t- 1}$.
Thus, before engaging with the environment we can sequentially calculate
\begin{align*}
  (A_t^{\rm init}, S_t^{\rm init})
  \in \argmax_{a \in \mc{A}, s \in \mc{S}} \sigma^2_t(a, s)
  \qquad \mbox{for}~~t = -T_{\rm init}+1, \ldots, 0
\end{align*}
We initially target users/contexts $S_t^{\rm init}$ in the database with
actions $A_t^{\rm init}$ for $t = -T_{\rm init} +1, \ldots, 0$. Using the
history $H_t = \{S_i, A_i, R_i\}_{i=-T_{\rm init}+1}^{t-1}$, we redefine
Thompson sampling and Algorithm~\ref{alg:imitation} with initialization
data. The following result shows that this initialization procedure controls
the per batch information gain $\eta_{\batch}$.  For simplicity, we consider
combinations of linear or Gaussian kernels and define
$\bar{d} \defeq \max(d, d')$. Recalling the batch upper confidence
bound~\eqref{eqn:ucb}, the following result is a direct consequence
of~\citet[Lemma 4, Theorem 5]{DesautelsKrBu14}; an analogous bound holds for
Matern kernels, but we omit it for brevity.
\begin{proposition}
  \label{prop:init}
  Let the conditions of Theorem~\ref{theorem:gp} hold and let
  $\Sigma_A, \Sigma_S \in \{\Sigma_l, \Sigma_g\}$. Consider the initialization
  procedure described in the previous paragraph with $T_{\rm init}$ periods.
  There is a constant $C>0$ such that if we set
  $T_{\rm init} = C^{\bar{d}+1} \batch (\log \batch)^{\bar{d}+1}$, then
    \begin{align*}
  & \bayesregret{T}{\{\pi_{\prev(t)}\}_{t \in \N}}
   = \sum_{t=1}^T \E_{\theta \sim \prior}\left[ \max_{a \in \actionset} f_\theta(a, S_t) - f_\theta(A_t, S_t) \right] \\
      & \le  O\left(\exp(\bar{d}^{\bar{d}}) \sqrt{\bar{d} T (\log T)^{\bar{d} + 1}}\right)
        + (L + \sqrt{c_2 \beta_T}) \sum_{t=1}^T  \sqrt{\half
        \E\left[ \dklpolicy{\bar{\pi}_{\prev(t)}}{\pi_{\prev(t)}}{S_t} \right]}.
    \end{align*}
\end{proposition}
\noindent The first term bounds the Bayes regret of batch on-policy Thompson
sampling; in comparison, sequential on-policy Thompson
sampling~\citep{KrauseOn11} achieves Bayes regret
$O\left(\sqrt{\bar{d} T (\log T)^{\bar{d} +
      1}}\right)$.


\section{Generalization guarantees for imitation learning}
\label{section:generalization}

In this section, we show that solving an empirical approximation of the
imitation problem~\eqref{eqn:mle} can control the imitation objective.  From
results in the previous section, this in turn shows that the regret can be
controlled when we have many contexts. Given i.i.d. observations of
(potentially unsupervised contexts) $S_i \simiid P_S$, we solve the empirical
approximation to the imitation problem~\eqref{eqn:mle}
\begin{align}
  \label{eqn:real-empirical-il}
  \what{m} \in \argmax_{m \in \mc{M}}
  \frac{1}{N} \sum_{i=1}^N \frac{1}{N_a} \sum_{j=1}^{N_a}
  \log \pi^m(\bar{A}_{ij} \mid S_i),
\end{align}
where we simulate actions from the batch off-policy Thompson
sampling~\eqref{eqn:bots}
\begin{equation*}
  \bar{A}_{ij} \sim \bar{\pi}_{\prev(t)}(\cdot \mid S_i)
  ~~j = 1, \ldots, N_a
\end{equation*}
for each context $S_i$. Since actions can be simulated \emph{offline} in a
parallel manner, we can efficiently generate a large number of actions $N_a$.

In what follows, we assume that our imitation model class is
\emph{well-specified}, so that there exists $m\opt \in \mc{M}$ satisfying
$\bar{\pi} = \pi^{m\opt}$, where we omitted the subscript and denote
$\bar{\pi} = \bar{\pi}_{\prev(t)}$ to ease notation. This is often a
reasonable assumption as we consider expressive model classes such as
nonparametric models involving reproducing kernel Hilbert spaces. With a
well-specified imitation model, we prove with probability at least $1-\delta$,
\begin{align}
  \label{eqn:generalization-rough}
  \E_{S \sim \P_s}\dklpolicy{\bar{\pi}}{\pi^{\what{m}}}{S}
  \lesssim \frac{1}{N} \left( \mathfrak{Comp}_N +  \log \frac{1}{\delta}\right)
  + \frac{\mathfrak{Comp}_{N, N_a}}{\sqrt{N N_a}},
\end{align}
for some complexity measures $\mathfrak{Comp}_N$ and
$\mathfrak{Comp}_{N, N_a}$ associated with the imitation model class
$\mc{M}$. Here, the notation $\lesssim$ denotes inequality up to a universal
constant. In typical internet applications, the number of unsupervised
contexts $N$ is exceedingly large, and the imitation
error~\eqref{eqn:generalization-rough} can be made vanishingly small.

The key challenges to showing the preceding result are twofold: 1) the
empirical procedure~\eqref{eqn:real-empirical-il} employs non-i.i.d. samples
$(S_i, \bar{A}_{ij})$, so standard concentration results do not apply, and 2)
the bound~\eqref{eqn:generalization-rough} scales with the ``fast rate''
$1/N$, rather than the canonical parametric rate $1/\sqrt{N}$. To overcome the
first challenge, our proof carefully derives concentration inequalities for
the two-step sampling process where nature generates $S_i \simiid \P_S$, and
for each $S_i$ we simulate $A_{ij} \simiid \bar{\pi}(\cdot \mid S_i)$ via
posterior sampling. To prove the fast rate of convergence $1/N$, we use an
elaborate localization-based proof approach~\citep{BartlettBoMe05} which
exploits the fact that the complexity of the function class
$(s, a) \mapsto \log \pi^m(a\mid s)$ may be substantially smaller on a
neighborhood of the optimum $m\opt$, compared to over the entire model space
$m \in \mc{M}$,

To formalize our arguments, recall the standard notion of Rademacher
complexity: for a fixed $\xi_1, \ldots, \xi_n$ and i.i.d.\ random signs
$\varepsilon_i \in \{-1,1\}$ (Rademacher variables) that are independent of
the $\xi_i$'s, the empirical Rademacher complexity of the class of functions
$\mc{G} \subseteq \{ g: \Xi \to \R\}$ is
\begin{equation*}
  \radcomp_n(\mc{G}) \defeq \E_{\varepsilon} \left[\sup_{g \in \mc{G}}
    \frac{1}{n} \sum_{i = 1}^n \varepsilon_i g(\xi_i) \right].
\end{equation*}
A function $\psi : \R_+ \to \R_+$ is \emph{sub-root}~\citep{BartlettBoMe05} if
it is nonnegative, nondecreasing, and $r \mapsto \psi(r) / \sqrt{r}$ is
nonincreasing for all $r > 0$. This analytic notion guarantees that any
non-constant sub-root function $\psi$ is continuous, and has a unique positive
fixed point $r\opt = \psi(r\opt)$, where $r \ge \psi(r)$ for all
$r \ge r\opt$.  Let $\psi_n : \R_+ \to \R_+$ be a sub-root upper bound on the
localized Rademacher complexity 
\begin{equation}
  \label{eqn:sub-root}
  \psi_n(r) \ge \E[ \radcomp_n(\{ g \in \mc{G}: \E[g^2]
  \le r\})].
\end{equation}
(The localized Rademacher complexity itself is sub-root.)  Fixed points of
$\psi_n$ characterize uniform concentration guarantees;
see~\citet{BartlettBoMe05} and \citet{Koltchinskii06a} for a detailed analysis
of localized Rademacher complexities.

The Rademacher complexity of the following set of functions controls the
generalization performance of the empirical imitation
model~\eqref{eqn:real-empirical-il}
\begin{align*}
  \mc{G}_1 & \defeq \left\{ s \mapsto \E_{\bar{A} \sim \bar{\pi}(\cdot \mid s)}
             \left[\log \frac{\bar{\pi}(\bar{A}\mid s)}{\pi^m(\bar{A}\mid s)}\right]:
             m \in \mc{M} \right\} \\
  \mc{G}_2(s)  & \defeq \left\{
                 a \mapsto \log \pi^m(a \mid s): m \in \mc{M}
                 \right\} \\
  \mc{G}_3  & \defeq \left\{
              (a, s) \mapsto \log \pi^m(a \mid s):  m \in \mc{M}
              \right\}.
\end{align*}
We let $r_N\opt$ be the unique fixed point of the sub-root function $\psi_n$
satisfying the bound~\eqref{eqn:sub-root} for $\mc{G} = \mc{G}_1$
\begin{equation*}
  r_N\opt = \psi_n(r_N\opt)~~\mbox{where}~~
  \psi_n(r) \ge \E[ \radcomp_n(\{ g \in \mc{G}_1: \E[g^2]
  \le r\})]
\end{equation*}
For any fixed
context $s \in \mc{S}$, using i.i.d. random signs $\varepsilon_{j}$, we
write
\begin{align*}
    \radcomp_{N}\mc{G}_2(s) \defeq
  \E_{\epsilon}\left[\sup_{m \in \mc{M}}\frac{1}{N_a} \sum_{j=1}^{N_a}  \varepsilon_{j}
  \log \pi^m(\bar{A}_{j} \mid s) \right].  
\end{align*}
For $\mc{G}_3$,
using i.i.d. random signs $\varepsilon_{ij}$ we still write
\begin{equation*}
  \radcomp_{NN_a}\mc{G}_3 \defeq
  \E_{\epsilon}\left[\sup_{m \in \mc{M}}\frac{1}{N} \sum_{i=1}^N \frac{1}{N_a} \sum_{j=1}^{N_a} \varepsilon_{ij} \log \pi^m(\bar{A}_{ij} \mid S_i) \right].
\end{equation*}

Our main result in this section shows that the imitation error of the
empirical solution~\eqref{eqn:real-empirical-il} is
$O_p\left(N^{-1} + N^{-1/2}N_a^{-1/2}\right)$. See
Section~\ref{section:proof-generalization} for the proof.
\begin{theorem}
  \label{theorem:generalization}
  Let there exist a $m\opt \in \mc{M}$ such that $\bar{\pi} = \pi^{m\opt}$. Assume
  $|\log \pi^m(a \mid s)| \le \zbound$ for all
  $a \in \mc{A}, s \in \mc{S}, m \in \mc{M}$. There is a numerical constant
  $C>0$ s.t.  with probability at least $1-2e^{-z}$
  \begin{align*}
    \E \left[ \dklpolicy{\bar{\pi}}{\pi^{\what{m}}}{S} \right]
     \le C\left( \frac{1}{\lbound}r_N\opt + \frac{\lbound t}{N}  + \sqrt{\frac{z}{N}}
      \sup_{s \in \mc{S}}\E_{\bar{A}_j \simiid \bar{\pi}
      (\cdot \mid s)}[ \radcomp_{N_a} (\mc{G}_2(s))]
      + \E[\radcomp_{NN_a}(\mc{G}_3)]\right).
  \end{align*}
\end{theorem}
\noindent For finite-dimensional model classes with bounded VC-dimension, standard
arguments bound the Rademacher complexity terms in the above theorem~\citep[Ch
2.6]{VanDerVaartWe96}. Denoting by $\vcdim(\cdot)$ the VC-dimension,
we have
\begin{align*}
  \sup_{s \in \mc{S}}\E_{\bar{A}_j \simiid \bar{\pi} (\cdot \mid s)}[
\radcomp_{N_a} (\mc{G}_2(s))] \le \lbound \sqrt{\frac{\sup_{s \in
      \mc{S}}\vcdim(\mc{G}_2(s))}{N_a}}~~~\mbox{and}~~~
  \E[\radcomp_{NN_a}(\mc{G}_3)] \le \lbound \sqrt{\frac{\vcdim(\mc{G}_3)}{N
  N_a}}.
\end{align*}
Moreover, Corollary 3.7 of \citet{BartlettBoMe05} implies that
$r_N\opt \asymp \frac{\lbound\vcdim(\mc{G}_1) \log (N /
  \vcdim(\mc{G}_1))}{N}$. Plugging these bounds in
Theorem~\ref{theorem:generalization}, we obtain the previously claimed
convergence rate~\eqref{eqn:generalization-rough}.

Due to the generality of our localized Rademacher complexity approach, we can
provide imitation guarantees  for substantially larger and more expressive
\emph{nonparametric} model classes. We consider a reproducing kernel Hilbert
space (RKHS) $\mc{H}$ defined over a kernel
$k: \Xi \times \Xi \to \R_+$~\citep{BerlinetAg04}. For such nonparametric
models, standard covering number bounds are loose~\citep{Kuhn11}, while
localized arguments can still provide fast
concentration~\citep{Mendelson03}. Consider a RKHS with norm
$\norm{\cdot}_{\mc{H}}$ and evaluation kernel $k(\cdot, \cdot)$. Mercer's
theorem~\citep{CristianiniSh04} states that the integral operator
$T_k : L^2(\Xi, P) \to L^2(\Xi, P), T_k(h)(\xi) = \int h(\xi')K(\xi, \xi')
dP(\xi')$ is compact, and we have the eigenbasis expansion
$k(\xi, \xi') = \sum_{j = 1}^\infty \lambda_j \phi_j(\xi) \phi_j(\xi')$ where
$\lambda_j$ are eigenvalues of $T$ sorted in decreasing order and $\phi_j$
give an orthonormal decomposition in $L^2(\mc{Z}, P)$.

Let $k_{\mc{S}}: \mc{S} \times \mc{S} \to \R_+$ and
$k_{\mc{A}}: \mc{A} \times \mc{A} \to \R_+$ be kernels on $\mc{S}$ and
$\mc{A}$ respectively, and let us denote by $\B_{\mc{S}}$ and $\B_{\mc{A}}$
the unit ball in the respective RKHS's. The kernels $k_{\mc{S}}$ and
$k_{\mc{A}}$ induce a RKHS over functions on $\mc{S} \times \mc{A}$ formed
with the kernel $k((s, a), (s', a')) = k_{\mc{S}}(s, s') + k_{\mc{A}}(a, a')$;
we denote the unit ball in this space by $\B_{\mc{S}\times\mc{A}}$. For
simplicity, we assume that the function classes $\mc{G}_1, \mc{G}_2(s)$, and
$\mc{G}_3$ belong in a unit ball in appropriately defined RKHS's
\begin{align*}
  \mc{G}_1 \subset \B_{\mc{S}},~~\mc{G}_2(s) \subset \B_{\mc{A}}~\mbox{for all}~s\in\mc{S},~~\mc{G}_{3} \subset \B_{\mc{S}\times\mc{A}}.
\end{align*}
For RKHS-based models, the rate of decay of the eigenvalues of
$T_{k_{\mc{S}}}$ controls the rate of convergence in
Theorem~\ref{theorem:generalization}.  For example, eigenvalues of the popular
Gaussian kernel $k(\xi, \xi') = \exp(-\half \ltwo{\xi - \xi'}^2)$ decay
exponentially fast $\lambda_j \lesssim
e^{-j^2}$~\citep{Mendelson03}. Eigenvalues of kernel operators $T_k$ for
Sobolev spaces~\citep{BirmanSo67,Gu02} decay polynomially fast
$\lambda_j \lesssim j^{-2 \beta}$, where $\beta > \half$ is the smoothness
level. e.g., in 1-dimension, the first-order Sobolev kernel
$k(\xi, \xi') = 1 + \min\{\xi, \xi'\}$ where $\beta = 1$ generates RKHS of
Lipschitz functions.  We prove the below corollary in
Section~\ref{section:proof-kernel}.
\begin{corollary}
  \label{corollary:kernel}
  Assume
  $\sup_{s \in \mc{S}} k_{\mc{S}}(s, s) + \sup_{a \in \mc{A}} k_{\mc{A}}(a, a)
  \le B$ for some $B>0$. If the eigenvalues of $T_{k_{\mc{S}}}$ decay as
  $\lambda_j \lesssim e^{-j^2}$, there is a numerical constant $C>0$ s.t. with
  probability at least $1-2e^{-z}$
  \begin{align*}
    \E \left[ \dklpolicy{\bar{\pi}}{\pi^{\what{m}}}{S} \right]
    \le C\frac{\lbound z + \sqrt{\log N}}{N} + C\lbound B \sqrt{\frac{z+1}{NN_a}}.
  \end{align*}
  If the eigenvalues of $T_{k_{\mc{S}}}$ decay as $\lambda_j \lesssim j^{-2\beta}$
  for some $\beta > 1/2$, then there is another numerical constant $C>0$ such
  that with probability at least $1-2e^{-z}$
  \begin{align*}
    \E \left[ \dklpolicy{\bar{\pi}}{\pi^{\what{m}}}{S} \right]
    \le C\left( \frac{\lbound z}{N} + N^{\frac{-2\beta}{2\beta+1}} + \lbound B \sqrt{\frac{z+1}{NN_a}}\right).
  \end{align*}
\end{corollary}


\section{Discussion}

In this paper, we used imitation learning to operationalize Thompson sampling,
allowing it to scale to applications where latency and software complexity are
of concern.  We demonstrated that imitation learning provides a simple,
practical, and efficient method with desirable regret properties.  By
distilling the Thompson sampling policy into easy-to-deploy explicit policy
representations (e.g. neural networks), we allow state-of-the-art Bayesian
approaches to be used in contextual bandit problems. We hope that this work
facilitates applications of modern deep learning-based Bayesian approaches to large-scale
contextual bandit problems.

While we have empirically evaluated two types of Bayesian models, our
framework is compatible with any type of probabilistic model. For example,
practitioners may utilize domain knowledge to develop grey-box models (see
e.g., \citet{SchwartzBrFa17}).  Such models are simple to implement in
probabilistic programming
languages~\citep{carpenter2017stan,bingham2018pyro,tran2019ppl}, but
challenging and inefficient to deploy. Our imitation framework can allow ease
of deployment for these models while maintaining a comparable level of
performance. Although we restricted attention to contextual bandits problems,
an interesting research direction is to extend these methods to combinatorial
ranking problems~\citep{Wang18, dimakopoulou19}, where computational savings
of distillation may be even larger.  Extending our imitation framework to
reinforcement learning problems will also likely yield fruit.






\setlength{\bibsep}{.2em}
\bibliography{neurips/bib}
\bibliographystyle{abbrvnat}

\ifdefined\usemsstyle

\ECSwitch


\ECHead{Appendix}

\else
\ifdefined\useorstyle
\ECSwitch


\ECHead{Appendix}

\else
\newpage
\appendix
\fi
\fi

\section{Imitation learning with Wasserstein distances}
\label{section:wass}

When actions can be naturally embedded in a continuous space, we may want to
measure closeness between the imitation and TS policy by incorporating the
geometry of the action space $\mc{A}$.  In this section, we provide an
alternative instantiation of the abstract form of
Algorithm~\ref{alg:imitation} by using Wasserstein distances as the notion of
discrepancy $\disc{\cdot}{\cdot}{s}$.  Our previous theoretical development
for KL divergences has direct analogues in this setting, which we now briefly
outline.

Given a metric $d(\cdot, \cdot)$ on $\actionset$, the Wasserstein distance
between two distributions $q^1$ and $q^2$ on $\mc{A}$ is defined by the
optimal transport problem
\begin{equation*}
  \dwass{q^1}{q^2} = \inf_{\eta \in L(q^1, q^2)}
  \E_{\eta}[d(A, A')]
\end{equation*}
where $\eta(q^1, q^2)$ denotes the collection of all probabilities on
$\actionset \times \actionset$ with marginals $q^1$ and $q^2$ (i.e.,
couplings).  Intuitively, $\dwass{q^1}{q^2}$ measures how much cost $d(A, A')$
is incurred by moving mass away from $A \sim q^1$ to $A' \sim q^2$ in an
optimal fashion\footnote{For a discrete action space, $\dwass{\cdot}{\cdot}$
  can be defined with any symmetric matrix $d(a_i, a_j)$ satisfying
  $d(a_i, a_j) \ge 0$ with $0$ iff $a_i = a_j$, and
  $d(a_i, a_j) \le d(a_i, a_k) + d(a_k, a_j)$ for any
  $a_i, a_j, a_k \in \actionset$.}. Wasserstein distances encode the geometry
of the underlying space $\actionset$ via the distance $d$. Unlike the KL
divergence $\dkl{q^1}{q^2}$ that take value $\infty$ whenever $q^1$ has
support not contained in $q^2$, Wasserstein distance allows imitation policies
to have different support than the Thompson sampling policy, which is more
appropriate in continuous action spaces. To simplify notation, for two
policies $\pi^1$ and $\pi^2$, we let
\begin{equation*}
  \dwasspolicy{\pi^1}{\pi^2}{S} \defeq
  \dwass{\pi^1(\cdot \mid S)}{\pi^2(\cdot \mid S)}.
\end{equation*}

When Algorithm~\ref{alg:imitation} is instantiated with the Wasserstein
distance as its notion of discrepancy
$\disc{\cdot}{\cdot}{S} \defeq \dwasspolicy{\cdot}{\cdot}{S}$, the imitation
learning problem~\eqref{eqn:imitation} becomes
\begin{equation}
  \label{eqn:imitation-wass}
  \minimize_{m \in \mc{M}} \E_{S \sim \P_S}
  \left[\dwasspolicy{\bar{\pi}}{\pi^m}{S}\right].
\end{equation}
To solve the above stochastic optimization problem, we can again use
stochastic gradient descent methods, where the stochastic gradient
$\nabla_m \dwasspolicy{\bar{\pi}_{\prev(t)}}{\pi^m}{S}$ can be computed by solving
an optimal transport problem. From Kantorovich-Rubinstein duality (see, for
example,~\cite{Villani09}), we have
\begin{align}
  & \dwasspolicy{\bar{\pi}_{\prev(t)}}{\pi^m}{s} \nonumber \\
  & = \sup_{g: \actionset \to \R}
    \left\{
    \E_{\bar{A} \sim \bar{\pi}(\cdot\mid s)} g(\bar{A})
    - \E_{A \sim \pi^m(\cdot\mid s)} g(A):
    ~g(a) - g(a') \le d(a, a')~\mbox{for all}~a, a' \in \actionset \right\},
    \label{eqn:kantorovich}
\end{align}
where $d(\cdot, \cdot)$ is the metric on $\actionset$ used to define
$\dwass{\cdot}{\cdot}$. For discrete action spaces, the maximization
problem~\eqref{eqn:kantorovich} is a linear program with $O(|\actionset|)$
variables and constraints; for continuous action spaces, we can solve the
problem over empirical distributions to approximate the optimal transport
problem. We refer the interested reader to~\citet{PeyreCu19} for a
comprehensive introduction to computational methods for solving optimal
transport problems.

Letting $g\opt$ denote the optimal solution to the dual
problem~\eqref{eqn:kantorovich}, the envelope theorem (or Danskin's
theorem;~\citet[Theorem 4.13]{BonnansSh00}) implies that under simple
regularity conditions
\begin{equation*}
  \nabla_m \dwasspolicy{\bar{\pi}_{\prev(t)}}{\pi^m}{s}
  = -\nabla_m\E_{A \sim \pi^m(\cdot\mid s)}[g\opt(A)].
\end{equation*}
Assuming that an appropriate change of gradient and expectation is
justified, we can use the policy gradient trick to arrive at
\begin{equation*}
  -\nabla_m\E_{A \sim \pi^m(\cdot\mid s)}[g\opt(A)]
  = -\E_{A \sim \pi^m(\cdot\mid s)}[g\opt(A) \nabla_m \log \pi^m(A \mid s)].
\end{equation*}
We conclude that for $A \sim \pi^m(\cdot \mid S_{i})$,
\begin{equation}
  \label{eqn:sg-wass}
  -g\opt(A) \nabla_m \log \pi^m(A \mid S_{i})
\end{equation}
is a stochastic gradient for the imitation
problem~\eqref{eqn:imitation-wass}. As before, we can get lower variance
estimates of the gradient by averaging the above estimator over many actions
$A \sim \pi^m(\cdot\mid S_{i})$. Using these stochastic
gradients~\eqref{eqn:sg-wass}, we can solve the imitation
problem~\eqref{eqn:imitation-wass} efficiently.

We now show that the resulting imitation policy admits a regret decomposition
similar to Lemma~\ref{lemma:imitation-controls-decomposition} for KL
divergences.  As a direct consequence of this decomposition, the regret bounds
in Section~\ref{section:regret-bound} have their natural analogues with
Wasserstein distances replacing KL divergences as the notion of discrepancy,
though we omit them for brevity.
\begin{lemma}
  \label{lemma:imitation-controls-decomposition-wass}
  Let $U_t(\cdot; H_{\prev(t)}, S_t): \actionset \to \R$ be any upper
  confidence bound sequence that is measurable with respect to
  $H_{\prev(t)}, S_t, A_t$.  If there is a $L_{\theta} > 0$ satisfying almost surely
    \begin{align}
    \label{eqn:ftn-lip}
      |f_{\theta}(a, s) -  f_{\theta}(a', s)| \le L_{\theta} d(a, a')
      ~\mbox{for all}~s \in \mc{S}, a, a' \in \actionset,
  \end{align}
  \begin{align}
    \bayesregret{T}{\{\pi_{\prev(t)}\}_{t \in \N}}
    \le & \sum_{t=1}^T \E[f_{\theta}(A_t\opt, S_t) - U_t(A_t\opt; H_{\prev(t)}, S_t)]
          + \sum_{t=1}^T \E[U_t(\bar{A}_t; H_{\prev(t)}, S_t) - f_{\theta}(\bar{A}_t, S_t)]
          \nonumber \\
        & +  \sum_{t=1}^T \E\left[ L_{\theta}
          \dwasspolicy{\bar{\pi}_{\prev(t)}}{\pi_{\prev(t)}}{S_t}\right].
          \label{eqn:regret-decomposition-wass}
  \end{align}
  where $\dwasspolicy{\cdot}{\cdot}{\cdot}$ is the Wasserstein distance
  defined with the metric $d$ in the condition~\eqref{eqn:ftn-lip}.
\end{lemma}


\paragraph{Proof} The proof mirrors that of
Lemma~\ref{lemma:imitation-controls-decomposition}. By the Kantorovich dual
representation~\eqref{eqn:kantorovich}, we have
\begin{align*}
  \E[f_{\theta}(\bar{A}_t, S_t) - f_{\theta}(A_t, S_t) \mid \theta, H_{\prev(t)}, S_t]
  \le L_{\theta} \dwasspolicy{\bar{\pi}_{\prev(t)}}{\pi_{\prev(t)}}{S_t}.
\end{align*}
Here, we have again used that $\bar{A}_t \mid H_{\prev(t)}, S_t$ and
$A_t \mid H_{\prev(t)}, S_t$ are independent of all else. Applying this bound
in the decomposition~\eqref{eqn:interim-decomposition}, and taking expectation
over $(H_{\prev(t)}, S_t)$ on both sides and summing
$t = 1, \ldots, T$, we get the desired bound. $\diamond$\\



\section{Proof of Lemma~\ref{lemma:imitation-controls-decomposition}}
\label{section:proof-imitation-controls-decomposition}

Conditional on $(H_{\prev(t)}, S_t)$, $\bar{A}_t$ has the same distribution as
$A_t\opt$. Since $U_t(a; H_{\prev(t)}, S_t)$ is a deterministic function
conditional on $(H_{\prev(t)}, S_t)$, we have
\begin{equation*}
  \E[U_t(\bar{A}_t; H_{\prev(t)}, S_t) \mid H_{\prev(t)}, S_t] = \E[U_t(A_t\opt; H_{\prev(t)}, S_t)
  \mid H_{\prev(t)}, S_t].
\end{equation*}
We can rewrite the (conditional) instantenous regret as
\begin{align}
  & \E[f_{\theta}(A_t\opt, S_t) - f_{\theta}(A_t, S_t) \mid H_{\prev(t)}, S_t]  \nonumber \\
  & = \E[f_{\theta}(A_t\opt, S_t) - U_t(A_t\opt; H_{\prev(t)}, S_t) \mid H_{\prev(t)}, S_t]
    + \E[U_t(\bar{A}_t; H_{\prev(t)}, S_t) - f_{\theta}(A_t, S_t) \mid H_{\prev(t)}, S_t] 
    \nonumber \\
  & = \E[f_{\theta}(A_t\opt, S_t) - U_t(A_t\opt; H_{\prev(t)}, S_t) \mid H_{\prev(t)}, S_t] 
   + \E[U_t(\bar{A}_t; H_{\prev(t)}, S_t) -   f_{\theta}(\bar{A}_t, S_t) \mid H_{\prev(t)}, S_t]  \nonumber \\
  & \qquad  + \E[f_{\theta}(\bar{A}_t, S_t) - f_{\theta}(A_t, S_t) \mid H_{\prev(t)}, S_t].
    \label{eqn:interim-decomposition}
\end{align}

We proceed by bounding the gap
\begin{equation}
  \label{eqn:ftn-diff}
  \E[f_{\theta}(\bar{A}_t, S_t) - f_{\theta}(A_t, S_t) \mid \theta, H_{\prev(t)}, S_t]
\end{equation}
using the KL divergence between $\bar{\pi}_{\prev(t)}$ and $\pi_{\prev(t)}$.
Recall Pinsker's inequality~\citep{Tsybakov09}
\begin{equation*}
  \tvnorm{P - Q} \defeq  \sup_{g: \actionset \to [-1, 1]} | \E_{P}[g(A)] - \E_{Q}[g(A)] | 
  \le \sqrt{\half \dkl{P}{Q}}.
\end{equation*}
From the hypothesis, Pinsker's inequality implies
\begin{align*}
  \E[f_{\theta}(\bar{A}_t, S_t) - f_{\theta}(A_t, S_t) \mid \theta, H_{\prev(t)}, S_t]
  & \le \sup_{a \in \mc{A}} |f_{\theta}(a, S_t)| \tvnorm{\bar{\pi}_{\prev(t)}(\cdot \mid S_t)
    - \pi_{\prev(t)}(\cdot \mid S_t)} \\
  & \le \sup_{a \in \mc{A}} |f_{\theta}(a, S_t)| \sqrt{\half \dklpolicy{\bar{\pi}_{\prev(t)}}{\pi_{\prev(t)}}{S_t}}.
\end{align*}
Here, we have used that $\bar{A}_t \mid H_{\prev(t)}, S_t$ and
$A_t \mid H_{\prev(t)}, S_t$ are independent of all else.

Applying this bound in the decomposition~\eqref{eqn:interim-decomposition},
and taking expectation over $(H_{\prev(t)}, S_t)$ on both sides and summing
$t = 1, \ldots, T$, we get
\begin{align*}
  \bayesregret{T}{\pi}
  & \le \sum_{t=1}^T \E[f_{\theta}(A_t\opt, S_t) - U_t(A_t\opt; H_{\prev(t)}, S_t)] 
    + \sum_{t=1}^T \E[U_t(\bar{A}_t; H_{\prev(t)}, S_t) - f_{\theta}(\bar{A}_t, S_t)]
    \nonumber \\
  & \qquad + \sum_{t=1}^T 
    \E\left[ \sup_{a \in \mc{A}} |f_{\theta}(a, S_t)|
    \sqrt{\half\dklpolicy{\bar{\pi}_{\prev(t)}}{\pi_{\prev(t)}}{S_t}} \right].
\end{align*}
Applying Cauchy-Schwarz inequality and noting that
$\sqrt{\E[\sup_{a \in \mc{A}} f_{\theta}(a, S_t)^2]} \le L$, we obtain the
final decomposition.
$\diamond$ \\


\section{Proof of regret bounds}
\label{section:proof-regret-bounds}

\subsection{Proof of Theorem~\ref{theorem:gp}}
\label{section:proof-gp}

In what follows, we abuse notation and let $C$ be a universal constant that
changes line by line. We build on the batch UCB regret bound due
to~\citet{DesautelsKrBu14}, defining the (batch) upper confidence bound
\begin{equation*}
  U_t(a; H_{\prev(t)}, s) \defeq
  \mu_{\prev(t)}(a, s) + \sqrt{\beta_t} \sigma_{\prev(t)}(a, s)
\end{equation*}
with $\beta_t = 2 \log((t^4 rd)^d t^2)$.  
From Borel-TIS inequality (e.g., see~\citep{AdlerTa09}), we have
\begin{equation*}
  L^2 = \E\left[ \sup_{a \in \mc{A}, s \in \mc{S}} f_{\theta}(a, s)^2\right]  < \infty.
\end{equation*}

We bound the first two terms in the regret
decomposition~\eqref{eqn:regret-decomposition}, starting with the second term
\begin{align*}
  \sum_{t=1}^T \E[U_t(\bar{A}_t; H_{\prev(t)}, S_t) - f_{\theta}(\bar{A}_t, S_t)]
  = \sum_{t=1}^T \sqrt{\beta_t} \E[\sigma_{\prev(t)}(\bar{A}_t, S_t)].
\end{align*}
First, note that since $|\sigma_{\prev(t)}(a, s)| \le \sqrt{c_2}$, Pinsker's
inequality gives
\begin{align*}
  |\E[\sigma_{\prev(t)}(\bar{A}_t, S_t) \mid H_{\prev(t)}, S_t]
  -  \E[\sigma_{\prev(t)}(A_t, S_t) \mid H_{\prev(t)}, S_t]|
  & \le \sqrt{c_2} \tvnorm{\bar{\pi}_{\prev(t)}(\cdot \mid S_t)
  - \pi_{\prev(t)}(\cdot \mid S_t)} \\
  & \le 
  \sqrt{\frac{c_2}{2} \dklpolicy{\bar{\pi}_{\prev(t)}}{\pi_{\prev(t)}}{S_t}}.
\end{align*}
We arrive at the interim bound
\begin{align}
  & \sum_{t=1}^T \E[U_t(\bar{A}_t; H_{\prev(t)}, S_t) - f_{\theta}(\bar{A}_t, S_t)] \nonumber \\
  & \le
  \sqrt{\beta_T} \sum_{t=1}^T  \E[\sigma_{\prev(t)}(A_t, S_t)]
  + \sqrt{\frac{\beta_T c_2}{2}}~
    \sum_{t=1}^T \sqrt{\E[\dklpolicy{\bar{\pi}_{\prev(t)}}{\pi_{\prev(t)}}{S_t}]}.
      \label{eqn:interim-bound}
\end{align}
By an elementary calculation (e.g., see~\citet[Proposition
1]{DesautelsKrBu14}), we have
\begin{equation*}
  \frac{\sigma_{\prev(t)}(a, s)}{\sigma_{t}(a, s)}
    = \exp \left(
      I\left( f(s, a), \{R_s\}_{s=\prev(t)}^{t-1} \mid \vec{R}_\prev(t)\right)
    \right)
    \le  \sqrt{\eta_{\batch}},
\end{equation*}
where the last line follows from Assumption~\ref{assumption:batch-info}. Next,
we use the following lemma due to~\citet{SrinivasKrKaSe12}.
\begin{lemma}[{\citet[Lemma 5.3]{SrinivasKrKaSe12}}]
  \label{lemma:max-info}
  For any sequence of $A_t$ and $S_t$, 
  \begin{equation*}
    \E \left( \sum_{t=1}^T \sigma_t^2(A_t, S_t) \right)^{\half}
    \le \sqrt{\frac{2 \gamma_T}{\log (1+ \sigma^{-2})}}
  \end{equation*}
\end{lemma}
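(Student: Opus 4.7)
The plan is to reduce to a pathwise bound and then exploit the closed-form information gain of Gaussian observations. Since the map $x \mapsto \sqrt{x}$ is concave, Jensen's inequality yields
\begin{equation*}
  \E\left( \sum_{t=1}^T \sigma_t(A_t, S_t)^2 \right)^{1/2}
  \le \left(\E \sum_{t=1}^T \sigma_t(A_t, S_t)^2 \right)^{1/2},
\end{equation*}
so it suffices to prove $\sum_{t=1}^T \sigma_t(A_t, S_t)^2 \le 2\gamma_T / \log(1+\sigma^{-2})$ for every realization of the query sequence $\{(A_t, S_t)\}_{t=1}^T$. Recall that in a Gaussian process model, $\sigma_t^2(a, s)$ depends on the past \emph{locations} $(A_1, S_1), \dots, (A_{t-1}, S_{t-1})$ alone (conditioning on rewards shifts the posterior mean but not its variance), so this pathwise statement is well-defined.

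The key identity is the closed form for the mutual information between the observation vector $y_{1:T}=(R_1,\dots,R_T)$ and the function values $f_{1:T} = (f_\theta(A_t, S_t))_{t=1}^T$:
\begin{equation*}
  I(y_{1:T}; f_{1:T}) = \frac{1}{2}\sum_{t=1}^T \log\bigl(1 + \sigma^{-2}\sigma_t(A_t, S_t)^2\bigr).
\end{equation*}
This follows from the chain rule $H(y_{1:T}) = \sum_{t=1}^T H(y_t \mid y_{1:t-1})$ combined with the fact that $y_t \mid y_{1:t-1}, A_t, S_t$ is Gaussian with variance $\sigma_t(A_t, S_t)^2 + \sigma^2$, while $H(y_{1:T} \mid f_{1:T}) = \tfrac{T}{2}\log(2\pi e \sigma^2)$. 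To convert each summand $\sigma_t(A_t, S_t)^2$ into its logarithmic analogue, I would apply the elementary inequality that follows from the monotonicity of $s \mapsto \log(1+s)/s$ on $(0,\infty)$: for $u \in [0, B]$, $u \le B\log(1+u)/\log(1+B)$. Taking $u = \sigma^{-2}\sigma_t(A_t, S_t)^2$ and $B = \sigma^{-2}$ (valid under the normalization $\sigma_t^2 \le 1$, which follows from the kernel bound $c_2$ up to rescaling), this gives
\begin{equation*}
  \sigma_t(A_t, S_t)^2 \le \frac{1}{\log(1+\sigma^{-2})} \log\bigl(1 + \sigma^{-2}\sigma_t(A_t, S_t)^2\bigr).
\end{equation*}
Summing over $t$, substituting the mutual-information identity, and invoking $I(y_{1:T}; f_{1:T}) \le \gamma_T$ (by the definition of the maximum information gain applied to the subset $\{(A_t, S_t)\}_{t=1}^T$) closes the pathwise argument, and combining with the Jensen step yields the claimed bound.

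The only non-routine point is the kernel normalization required to pin down the constant in $u \le B\log(1+u)/\log(1+B)$; this amounts to enforcing $\sigma_t(A_t, S_t)^2 \le 1$, which holds after rescaling the kernel by $c_2$ and is absorbed into the constants. Since the statement is verbatim Lemma 5.3 of \citet{SrinivasKrKaSe12}, one may alternatively cite that result directly after checking its hypotheses transfer to the contextual setting considered here (namely, boundedness of the additive kernel $k_A + k_S$ and Gaussianity of the observation noise).
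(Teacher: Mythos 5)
Your proposal is correct: the paper itself offers no proof of this lemma and simply cites Lemma 5.3 (together with the surrounding argument, essentially their Lemma 5.4) of \citet{SrinivasKrKaSe12}, and your reconstruction is exactly that standard argument --- the pathwise determinant/mutual-information identity $I(y_{1:T};f_{1:T}) = \tfrac{1}{2}\sum_t \log(1+\sigma^{-2}\sigma_t(A_t,S_t)^2)$, the elementary bound $u \le B\log(1+u)/\log(1+B)$ for $u \in [0,B]$, and the definition of $\gamma_T$ as a supremum over size-$T$ subsets. The one hypothesis you correctly flag, the normalization $k((a,s),(a,s)) \le 1$ needed so that $\sigma_t^2 \le 1$, is indeed required for the stated constant and is not made explicit in the paper's Theorem~\ref{theorem:gp} (which only assumes the kernel bounded by $c_2$); also note that once you have the pathwise bound the Jensen step is superfluous, since the almost-sure inequality already implies the bound in expectation.
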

\noindent Using the preceding two bounds, RHS of the
inequality~\eqref{eqn:interim-bound} can be further bounded by
\begin{align}
  \label{eqn:second-term-bound}
  & \sum_{t=1}^T \E[U_t(\bar{A}_t; H_{\prev(t)}, S_t) - f_{\theta}(\bar{A}_t, S_t)]
    \nonumber \\
  & \le  \sqrt{T \eta_{\batch} \beta_T}  \E\left[
    \left(\sum_{t=1}^T  \sigma_t^2(A_t, S_t)\right)^{\half} \right]
   + \sqrt{\frac{\beta_T c_2}{2}}~
    \sum_{t=1}^T \sqrt{\E[\dklpolicy{\bar{\pi}_{\prev(t)}}{\pi_{\prev(t)}}{S_t}]}
    \nonumber \\
  & \le  \sqrt{\frac{2 T\eta_{\batch} \gamma_T  \beta_T}{\log (1+ \sigma^{-2})}}
    + \sqrt{\frac{\beta_T c_2}{2}}~
    \sum_{t=1}^T \sqrt{\E[\dklpolicy{\bar{\pi}_{\prev(t)}}{\pi_{\prev(t)}}{S_t}]}.
\end{align}

We now bound the first term in the decomposition~\eqref{eqn:regret-decomposition}.
Let $\mc{A}_t$ be a
$(1/t^4)$-cover of $\mc{A}$, so that for any $a \in \mc{A}$, there exists
$[a]_t \in \mc{A}_t$ such that $\norm{a - [a]_t}_1 \le 1/t^4$.
\begin{align*}
  & \sum_{t=1}^T \E[f_{\theta}(A_t\opt, S_t) - U_t(A_t\opt; H_{\prev(t)}, S_t)]
  =  \underbrace{\sum_{t=1}^T
      \E[f_{\theta}(A_t\opt, S_t) - f_{\theta}([A_t\opt]_t, S_t)]}_{(a)} \\
      & \qquad + \underbrace{\sum_{t=1}^T \E[f_{\theta}([A_t\opt]_t, S_t)
      - U_t([A_t\opt]_t; H_{\prev(t)}, S_t)]}_{(b)} 
     + \underbrace{\sum_{t=1}^T \E[ U_t([A_t\opt]_t; H_{\prev(t)}, S_t)
      - U_t(A_t\opt; H_{\prev(t)}, S_t)]}_{(c)}.
\end{align*}
Using the definition of $L_f$, the first term $(a)$ in the above equality is
bounded by
\begin{equation*}
  \sum_{t=1}^T \E[f_{\theta}(A_t\opt, S_t) - f_{\theta}([A_t\opt]_t, S_t)]
  \le \E[L_f] \sum_{t=1}^T \norm{A_t\opt - [A_t\opt]_t}_1
  \le \E[L_f] \sum_{t=1}^\infty \frac{1}{t^4} \le C \E[L_f]
\end{equation*}
where we used the fact that $\mc{A}_t$ is a $1/t^4$-cover of $\mc{A}$.

To bound the second term $(b)$, use
$f_{\theta}(a, s) \mid H_{\prev(t)} \sim N(\mu_{\prev(t)}(a, s),
\sigma_{\prev(t)}^2(a, s))$
\begin{align}
  \E[f_{\theta}(a, s) - U_t(a; H_{\prev(t)}, s) \mid H_{\prev(t)}]
  & \le   \E[\hinge{f_{\theta}(a, s) - U_t(a; H_{\prev(t)}, s)} \mid H_{\prev(t)}]
  \nonumber \\
  & = \frac{\sigma_{\prev(t)}(a, s)}{\sqrt{2\pi}} e^{-\frac{\beta_t}{2}}
  \le \frac{c_2}{\sqrt{2\pi} t^2 |\mc{A}_t|},   \label{eqn:normal-integral}
\end{align}
where we used $2 \log (|\mc{A}_t| t^2) \le \beta_t$ since
$|\mc{A}_t| \le (t^4 r d)^d$.  Hence, we obtain the bound
\begin{align*}
  \sum_{t=1}^T \E[f_{\theta}([A_t\opt]_t, S_t) - U_t([A_t\opt]_t; H_{\prev(t)}, S_t)]
  \le \sum_{t=1}^T \sum_{a \in \mc{A}_t}
  \E[f_{\theta}(a, S_t) - U_t(a; H_{\prev(t)}, S_t)]
  \le \sum_{t=1}^\infty  \frac{c_2}{\sqrt{2\pi} t^2}
  \le C c_2
\end{align*}
where we used the independence of $S_t$ and $H_{\prev(t)}$, and the
bound~\eqref{eqn:normal-integral}.

To bound the third term $(c)$, we show the claim
\begin{align}
  & |U_t(a; H_{\prev(t)}, s) - U_t(a'; H_{\prev(t)}, s)|
   \le \E[L_f \mid H_{\prev(t)}] \norm{a-a'}_1   \label{eqn:ucb-lipschitz} \\
  & \qquad + \sqrt{\beta_t}  \left(2 \E\left[L_f
    \left(\sup_{a\in \mc{A}, s \in \mc{S}} \mu(a, s)^2
    + \sup_{a \in \mc{A}, s \in \mc{S}}
    f_{\theta}(a, s)^2 \right) \mid H_{\prev(t)} \right]\right)^{\half} \norm{a-a'}_1^{\half}.
    \nonumber 
\end{align}
From the above claimed bound, it follows that
\begin{align*}
  \sum_{t=1}^T \E[ U_t([A_t\opt]_t; H_{\prev(t)}, S_t)
  - U_t(A_t\opt; H_{\prev(t)}, S_t)]
  & \le \sum_{t=1}^T \frac{\E[L_f]}{t^4}
  + \sum_{t=1}^T \sqrt{2 \beta_t}
  \frac{c_1 \sqrt{\E[L_f] } + c_3 \sqrt{\E[L_f^2]}}{t^2} \\
  & \le C \E[L_f] + C d \log (rd) \left(
    c_1 \sqrt{\E[L_f] } + c_3 \sqrt{\E[L_f^2]}\right).
\end{align*}

To show the bound~\eqref{eqn:ucb-lipschitz}, first note
$a \mapsto \E[f_{\theta}(a, s) \mid H_{\prev(t)}]$ and
$a \mapsto \E[f_{\theta}(a, s)^2 \mid H_{\prev(t)}]$ are
$\E[L_f \mid H_{\prev(t)}]$- and
$\E[2 L_f \sup_{a \in \mc{A}, s \in \mc{S}} |f_{\theta}(a, s)| \mid
H_{\prev(t)}]$- Lipschitz respectively, for all $s \in \mc{S}$. Hence,
$a \mapsto \sigma_{\prev(t)}^2(a, s)$ is
$\E[2 L_f (c_1^2 + \sup_{a \in \mc{A}, s \in \mc{S}} |f_{\theta}(a, s)|^2)
\mid H_{\prev(t)}]$-Lipschitz.  Noting that
\begin{align*}
  |\sigma_{\prev(t)}(a, s) - \sigma_{\prev(t)}(a', s) |
  = \left|
  \frac{\sigma_{\prev(t)}^2(a, s) - \sigma_{\prev(t)}^2(a', s)}{\sigma_{\prev(t)}(a, s) + \sigma_{\prev(t)}(a', s)}
  \right|
  \le \frac{1}{c} |\sigma_{\prev(t)}^2(a, s) - \sigma_{\prev(t)}^2(a', s)|
    + c
\end{align*}
for any $c> 0$, taking the infimum over $c>0$ on the right hand side yields
\begin{align*}
  |\sigma_{\prev(t)}(a, s) - \sigma_{\prev(t)}(a', s) |
  & \le \sqrt{2 |\sigma_{\prev(t)}^2(a, s) - \sigma_{\prev(t)}^2(a', s)|} \\
  & \le \left(2 \E\left[L_f
    \left(c_1^2 + \sup_{a \in \mc{A}, s \in \mc{S}}
    f_{\theta}(a, s)^2 \right) \mid H_{\prev(t)} \right]\right)^{\half} \norm{a-a'}_1^{\half}
\end{align*}
which shows the bound~\eqref{eqn:ucb-lipschitz}.

Collecting these bounds, we have shown that
\begin{align}
  \label{eqn:first-term-bound}
  \sum_{t=1}^T \E[f_{\theta}(A_t\opt, S_t) - U_t(A_t\opt; H_{\prev(t)}, S_t)]
  \le C \E[L_f] + C c_2  + C d \log (rd) \left(
    c_1 \sqrt{\E[L_f] } + c_3 \sqrt{\E[L_f^2]}\right).
\end{align}
Combining this with the bound~\eqref{eqn:second-term-bound}, we obtain our
result.


\section{Proof of generalization results}

\subsection{Proof of Theorem~\ref{theorem:generalization}}
\label{section:proof-generalization}

We abuse notation and use $C > 0$ to denote a numerical constant that changes
value line to line.  We use the following concentration guarantee using
localized Rademacher averages.
\begin{lemma}[{\citet[Theorem 3.3]{BartlettBoMe05}}]
  \label{lemma:local-rad}
  For a class of functions $\mc{G}$ with range $[0, \lbound]$, let $r_n\opt$
  be the unique positive fixed point of the sub-root function $\psi_n$
  satisfying the bound~\eqref{eqn:sub-root}. Then, for i.i.d. observations
  $\xi \simiid \P$, there is a numerical constant $C>0$ such that 
\begin{equation*}
  \E[g]
  \le \left(1+\frac{1}{\eta}\right) \frac{1}{n} \sum_{i=1}^n g(\xi_i)
  + C(1 + \eta)\left( \frac{1}{\lbound} r_n\opt + \frac{\lbound z}{n}\right)
  + \frac{C\lbound z}{n}
  ~~ \mbox{for~all~} g \in \mc{G} ~ \mbox{and}~ \eta \ge 0
\end{equation*}
with probability at least $1 - e^{-z}$.
\end{lemma}
\noindent Notice that by Jensen inequality, we have
\begin{align*}
  \E_{\bar{A} \sim \bar{\pi}(\cdot \mid S_i)}\left[
  \log \frac{\bar{\pi}(\bar{A} \mid S_i)}{\pi^{\what{m}}(\bar{A} \mid S_i)}
  \right] \ge 0 ~~\mbox{almost surely}.
\end{align*}
Applying Lemma~\ref{lemma:local-rad} with the function class
$\mc{G}_1$ and $\eta = 1/2$, we have
\begin{align}
  \label{eqn:loc-rad-bound}
  \E\left[\dklpolicy{\bar{\pi}}{\pi^{\what{m}}}{S}\right]
  \le \frac{3}{2} \frac{1}{N} \sum_{i=1}^N
  \E_{\bar{A} \sim \bar{\pi}(\cdot \mid S_i)}\left[
  \log \frac{\bar{\pi}(\bar{A} \mid S_i)}{\pi^{\what{m}}(\bar{A} \mid S_i)}
  \right]
  + \frac{C}{\lbound}r_N\opt + \frac{C\lbound z}{N}.
\end{align}

In the rest of the proof, we bound the interim (uniform) approximation error
\begin{equation*}
  Z_{N, N_a} \defeq 
  \sup_{m \in \mc{M}}\left\{
    \frac{1}{N} \sum_{i=1}^N  \left(
      \E_{\bar{A} \sim \bar{\pi}(\cdot \mid S_i)}
      \left[\log \frac{\bar{\pi}(\bar{A} \mid S_i)}{\pi^{m}(\bar{A} \mid S_i)} \right]
      - \frac{1}{N_a} \sum_{j=1}^{N_a}
      \log \frac{\bar{\pi}(\bar{A}_{ij} \mid S_i)}
      {\pi^{m}(\bar{A}_{ij} \mid S_i)}
    \right)
  \right\}.
\end{equation*}
This is indeed sufficient for our purposes since the bound~\eqref{eqn:loc-rad-bound} implies
\begin{align*}
  \E\left[\dklpolicy{\bar{\pi}}{\pi^{\what{m}}}{S}\right]
  \le \frac{3}{2} \frac{1}{N} \sum_{i=1}^N  
  \frac{1}{N_a} \sum_{j=1}^{N_a}
  \log \frac{\bar{\pi}(\bar{A}_{ij} \mid S_i)}{\pi^{\what{m}}(\bar{A}_{ij} \mid S_i)}
  + C \left( Z_{N, N_a}
  + \frac{1}{\lbound}r_N\opt + \frac{\lbound z}{N}\right).
\end{align*}
By the definition~\eqref{eqn:real-empirical-il} of the empirical solution
$\what{m}$ and by virtue of having a well-specified model class $\mc{M}$, the first
term in the preceding bound is nonpositive
\begin{align*}
  \frac{1}{N} \sum_{i=1}^N  
  \frac{1}{N_a} \sum_{j=1}^{N_a}
  \log \frac{\bar{\pi}(\bar{A}_{ij} \mid S_i)}{\pi^{\what{m}}(\bar{A}_{ij} \mid S_i)}
  \le   \frac{1}{N} \sum_{i=1}^N  
  \frac{1}{N_a} \sum_{j=1}^{N_a}
  \log \frac{\bar{\pi}(\bar{A}_{ij} \mid S_i)}{\bar{\pi}^{m\opt}(\bar{A}_{ij} \mid S_i)} = 0.
\end{align*}

Consider the Doob martingale $M_0 = \E[Z_{N, N_a}]$, and
\begin{equation*}
  M_k \defeq \E[Z_{N, N_a} \mid S_1, \ldots, S_k]~~~\mbox{for}~~~ 1\le k \le N,
\end{equation*}
a martingale adapted to the filtration
$\mc{F}_k \defeq \sigma(S_1, \ldots, S_k)$. Denote the martingale difference
sequence $D_k = M_k - M_{k-1}$ for $k \ge 1$. Let $S_k'$ be an
independent copy of $S_k$ that is independent of all $S_i$, and let
$\bar{A}_{kj}' \sim \bar{\pi}(\cdot \mid S_k')$ independent of
everything other than $S_k'$. We can write
\begin{align*}
  D_k 
  & = \E\left[
    \sup_{m \in \mc{M}}\left\{
    \frac{1}{N} \sum_{i=1}^N   \left(
    \E_{\bar{A} \sim \bar{\pi}(\cdot \mid S_i)}
    \left[\log \frac{\bar{\pi}(\bar{A} \mid S_i)}{\pi^{m}(\bar{A}
    \mid S_i)} \right]
    - \frac{1}{N_a} \sum_{j=1}^{N_a}
    \log \frac{\bar{\pi}(\bar{A}_{ij} \mid S_i)}
    {\pi^{m}(\bar{A}_{ij} \mid S_i)}
    \right)
    \right\}  \mid S_1, \ldots, S_k\right]\\
  & \qquad - \E\Bigg[
    \sup_{m \in \mc{M}}\Bigg\{
    \frac{1}{N} \sum_{i\neq k}  \left(
    \E_{\bar{A} \sim \bar{\pi}(\cdot \mid S_i)}
    \left[\log \frac{\bar{\pi}(\bar{A} \mid S_i)}{\pi^{m}(\bar{A}
    \mid S_i)} \right]
    - \frac{1}{N_a} \sum_{j=1}^{N_a}
    \log \frac{\bar{\pi}(\bar{A}_{ij} \mid S_i)}
    {\pi^{m}(\bar{A}_{ij} \mid S_i)}
    \right) \\
  &  \qquad \qquad \qquad
    + \frac{1}{N}  \left(
    \E_{\bar{A}' \sim \bar{\pi}(\cdot \mid S_k')}
    \left[\log \frac{\bar{\pi}(\bar{A}' \mid S_k')}
    {\pi^{m}(\bar{A}'\mid S_k')} \right]
    - \frac{1}{N_a} \sum_{j=1}^{N_a}
    \log \frac{\bar{\pi}(\bar{A}_{ij}' \mid S_k')}
    {\pi^{m}(\bar{A}'_{ij} \mid S_k')}
    \right) \Bigg\}   \Bigg| S_1, \ldots, S_{k}
    \Bigg].
\end{align*}
Independence of $S_i$'s yields
\begin{align*}
  |D_k|
  & \le \frac{1}{N} \E\Bigg[\sup_{m \in \mc{M}}
    \Bigg|
    \frac{1}{N_a} \sum_{j=1}^{N_a} 
     \E_{\bar{A} \sim \bar{\pi}(\cdot \mid S_k)}
    \left[\log \frac{\bar{\pi}(\bar{A} \mid S_k)}
    {\pi^{m}(\bar{A}\mid S_k)} \right]
     - \log \frac{\bar{\pi}(\bar{A}_{ij} \mid S_k)}
    {\pi^{m}(\bar{A}_{ij} \mid S_k)} \\
  & \qquad \qquad \qquad  \qquad \qquad \qquad 
    -  \E_{\bar{A}' \sim \bar{\pi}(\cdot \mid S_k')}
    \left[\log \frac{\bar{\pi}(\bar{A}' \mid S_k')}
    {\pi^{m}(\bar{A}'\mid S_k')} \right]
     + \log \frac{\bar{\pi}(\bar{A}_{ij}' \mid S_k')}
    {\pi^{m}(\bar{A}'_{ij} \mid S_k')}
    \Bigg|\mid S_k \Bigg] \\
  & \le \frac{2}{N} \sup_{ s \in \mc{S}}
  \E_{\bar{A}_{j} \simiid \bar{\pi}(\cdot \mid s)}
  \left[\sup_{m \in \mc{M}}
  \left|
  \frac{1}{N_a} \sum_{j=1}^{N_a}
  \E_{\bar{A} \sim \bar{\pi}(\cdot \mid s)}
    \left[\log \frac{\bar{\pi}(\bar{A} \mid s)}
    {\pi^{m}(\bar{A}\mid s)} \right]
     - \log \frac{\bar{\pi}(\bar{A}_{j} \mid s)}
    {\pi^{m}(\bar{A}_{j} \mid s)}
  \right| \right].
\end{align*}

Next, we use a standard symmetrization result to bound the last display;
see, for example, Chapter 2.3,~\citet{VanDerVaartWe96} for a comprehensive
treatment.
\begin{lemma}
  \label{lemma:symmetrization}
  If $\xi_i \simiid P$, we have
  \begin{equation*}
    \E\left[\sup_{g \in \mc{G}} \left|
        \frac{1}{n} \sum_{i=1}^n
        (g(\xi_i) - \E[g(\xi)])
    \right|
  \right]
  \le 4 \E[\radcomp_n(\mc{G})]
  \end{equation*}
\end{lemma}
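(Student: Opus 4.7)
The plan is to prove the inequality via the classical three-step symmetrization argument (the ``ghost sample'' argument). First, I would introduce an independent copy $X_1', \ldots, X_n' \simiid P$, so that $\E[g(X)] = \E[n^{-1}\sum_i g(X_i')]$. Substituting this expression and pulling the expectation over the ghost sample outside the supremum via Jensen's inequality ($|\E[\cdot]| \le \E[|\cdot|]$ combined with $\sup \E \le \E \sup$) yields
\begin{equation*}
  \E\left[\sup_{g \in \mc{G}} \left| \frac{1}{n} \sum_{i=1}^n \left(g(X_i) - \E[g(X)]\right) \right|\right]
  \le \E\left[\sup_{g \in \mc{G}} \left| \frac{1}{n} \sum_{i=1}^n \left(g(X_i) - g(X_i')\right) \right|\right].
\end{equation*}

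Next, I would introduce i.i.d.\ Rademacher signs $\varepsilon_1, \ldots, \varepsilon_n$ independent of everything else. Because the joint distribution of $(X_i, X_i')_{i \le n}$ is invariant under coordinate-wise swaps, the stochastic process $g \mapsto \sum_i (g(X_i) - g(X_i'))$ has the same distribution as $g \mapsto \sum_i \varepsilon_i (g(X_i) - g(X_i'))$. Hence the previous display equals
\begin{equation*}
  \E\left[\sup_g \left| \frac{1}{n} \sum_i \varepsilon_i \left(g(X_i) - g(X_i')\right) \right|\right]
  \le 2 \E\left[\sup_g \left| \frac{1}{n} \sum_i \varepsilon_i g(X_i) \right|\right],
\end{equation*}
by the triangle inequality and the fact that $(X_i)$ and $(X_i')$ have the same distribution.

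Finally, since the paper's definition of $\radcomp_n(\mc{G})$ omits the absolute value, I would remove it using $|a| \le (a)_+ + (-a)_+$ together with $-\varepsilon \eqd \varepsilon$:
\begin{equation*}
  \E\left[\sup_g \left| \frac{1}{n} \sum_i \varepsilon_i g(X_i) \right|\right]
  \le \E\left[\sup_g \frac{1}{n} \sum_i \varepsilon_i g(X_i) \right]
  + \E\left[\sup_g \frac{1}{n} \sum_i (-\varepsilon_i) g(X_i) \right]
  = 2\E[\radcomp_n(\mc{G})].
\end{equation*}
Chaining the three bounds produces the overall factor of $4$.

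There is no genuine obstacle here, as this is a standard textbook result; the only subtle point is measurability of the supremum, which is handled by assuming $\mc{G}$ is countable (or pointwise separable), or by interpreting each expectation as an outer expectation, as is done throughout \citet{VanDerVaartWe96}.
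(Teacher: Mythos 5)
Your first two steps (ghost sample, Jensen, sign-flip symmetrization, triangle inequality) are standard and correct; they leave you with $2\,\E\bigl[\sup_{g\in\mc{G}}\bigl|\frac{1}{n}\sum_{i=1}^n\varepsilon_i g(X_i)\bigr|\bigr]$, i.e.\ twice the \emph{absolute-value} Rademacher average. The genuine gap is the final step. Writing $h_\varepsilon(g)=\frac{1}{n}\sum_i\varepsilon_i g(X_i)$, pointwise in $\varepsilon$ one has $\sup_g|h_\varepsilon(g)|=\max\bigl(\sup_g h_\varepsilon(g),\,\sup_g(-h_\varepsilon(g))\bigr)$, and $\max(A,B)\le A+B$ only when $\min(A,B)\ge 0$; what your bound $|a|\le(a)_++(-a)_+$ actually delivers is $\E\sup_g|h_\varepsilon(g)|\le 2\,\E\bigl[\bigl(\sup_g h_\varepsilon(g)\bigr)_+\bigr]$, and discarding the positive part goes in the wrong direction. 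The displayed inequality is in fact false: for the singleton class $\mc{G}=\{g_0\}$ with $g_0\equiv 1$, its left side is $\E\bigl|\frac{1}{n}\sum_i\varepsilon_i\bigr|\asymp n^{-1/2}>0$ while its right side is $0$. The same example shows the argument cannot be rescued from where step 2 leaves you: the quantity produced there is strictly positive for this class, while $4\,\E[\radcomp_n(\mc{G})]=0$.

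The repair is to remove the absolute value \emph{before} decoupling: write $\sup_{g\in\mc{G}}\bigl|\frac{1}{n}\sum_i(g(X_i)-\E[g(X)])\bigr|=\sup_{g\in\mc{G}\cup(-\mc{G})}\frac{1}{n}\sum_i(g(X_i)-\E[g(X)])$ and run the one-sided symmetrization (ghost sample, Jensen, $\sup_g(u+v)\le\sup_g u+\sup_g v$, and $-\varepsilon\eqd\varepsilon$), which involves no absolute values anywhere and yields $2\,\E[\radcomp_n(\mc{G}\cup(-\mc{G}))]$; the remaining factor of $2$ comes from $\radcomp_n(\mc{G}\cup(-\mc{G}))\le 2\,\radcomp_n(\mc{G})$, which uses $\max(A,B)\le A+B$ for $A,B\ge 0$ and therefore needs the pointwise supremum to be nonnegative (e.g.\ $0\in\mc{G}$, or the symmetrized convention of Chapter~2.3 of \citet{VanDerVaartWe96}, which the paper cites in place of a proof). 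This hypothesis is not cosmetic: for a singleton class consisting of a single non-constant $g_0$, the lemma's left side is $\E\bigl|\frac{1}{n}\sum_i(g_0(X_i)-\E[g_0(X)])\bigr|>0$ while $\radcomp_n(\mc{G})=0$, so some such convention must be in force for the stated constant-$4$ bound to hold at all.
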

\noindent Applying Lemma~\ref{lemma:symmetrization} to the bound on $|D_k|$.
we conclude
$|D_k| \le \frac{8}{N} \sup_{s \in \mc{S}}
\E_{\bar{A}_j \simiid \bar{\pi}(\cdot \mid s)}[ \radcomp_{N_a} (\mc{G}_2'(s))]$,
where $\mc{G}_2'(s)$ is the function class
\begin{align*}
  \mc{G}_2'(s)   \defeq \left\{
  a \mapsto \log \frac{\bar{\pi}(a \mid s)}{\pi^m(a \mid s)}:
  m \in \mc{M} \right\}.
\end{align*}
Note that $\radcomp_{N_a} (\mc{G}_2'(s)) = \radcomp_{N_a} (\mc{G}_2(s))$.
Then, Azuma-Hoeffding bound (Corollary 2.20,~\citet{Wainwright19}) yields
\begin{align*}
  Z_{N, N_a} \le \E[Z_{N, N_a}]  + \sqrt{\frac{32z}{N}} \sup_{s \in \mc{S}}\E_{\bar{A}_j \simiid \bar{\pi}(\cdot \mid s)}[ \radcomp_{N_a} (\mc{G}_2(s))]
\end{align*}
with probability at least $1-e^{-z}$.

It now remains to bound $\E[Z_{N, N_a}]$, for which we use a symmetrization
argument. Although $(S_i, \bar{A}_{ij})$ are not i.i.d., a standard
argument still applies, which we outline for completeness. Denoting by
$(S_i', \bar{A}'_{ij})$ independent copies of
$(S_i, \bar{A}_{ij})$ again,  we have
\begin{align*}
  \E[Z_{N, N_a}]
  & =  \E\left[\sup_{m \in \mc{M}}
    \left|
    \frac{1}{N} \sum_{i=1}^N  \frac{1}{N_a} \sum_{j=1}^{N_a}
    \log \frac{\bar{\pi}(\bar{A}_{ij} \mid S_i)}
    {\pi^{m}(\bar{A}_{ij} \mid S_i)}
    -  \E\left[ \frac{1}{N} \sum_{i=1}^N \frac{1}{N_a} \sum_{j=1}^{N_a}
    \log \frac{\bar{\pi}(\bar{A}'_{ij} \mid S_i')}
    {\pi^{m}(\bar{A}'_{ij} \mid S_i')}\right]
    \right|\right] \\
  & \le 
    \E\left[\sup_{m \in \mc{M}}
    \left|
    \frac{1}{N} \sum_{i=1}^N  \frac{1}{N_a} \sum_{j=1}^{N_a}
    \log \frac{\bar{\pi}(\bar{A}_{ij} \mid S_i)}
    {\pi^{m}(\bar{A}_{ij} \mid S_i)}
    - \log \frac{\bar{\pi}(\bar{A}'_{ij} \mid S_i')}
    {\pi^{m}(\bar{A}'_{ij} \mid S_i')}
    \right|\right] \\
  & =  
    \E\left[\sup_{m \in \mc{M}}
    \left|
    \frac{1}{N} \sum_{i=1}^N  \frac{1}{N_a} \sum_{j=1}^{N_a}
    \epsilon_{ij} \left(
    \log \frac{\bar{\pi}(\bar{A}_{ij} \mid S_i)}
    {\pi^{m}(\bar{A}_{ij} \mid S_i)}
    - \log \frac{\bar{\pi}(\bar{A}'_{ij} \mid S_i')}
    {\pi^{m}(\bar{A}'_{ij} \mid S_i')} \right)
    \right|\right] \\
  & \le 2 \E[\radcomp_{NN_a}(\mc{G}_3)].
\end{align*}
Collecting these bounds, we obtain the desired result.

\subsection{Proof of Corollary~\ref{corollary:kernel}}
\label{section:proof-kernel}

We use the following standard result that bound the Rademacher complexity of
kernel models.  Let $k$ be a reproducing kernel on $\Xi$, and let $\B$ be the
unit ball in the RKHS $\mc{H}$.
\begin{claim}
  \label{claim:kernel}
  Let $\sup_{\xi \in \Xi} k(\xi, \xi) = B < \infty$. Then,
  $\radcomp_{n}(\B) \le \frac{B}{\sqrt{n}}$.
\end{claim}
\paragraph{Proof of Claim~\ref{claim:kernel}}
For any fixed $\xi_1,\ldots, \xi_n$,
\begin{align*}
  \radcomp_n(\B)
  & = \frac{1}{n} \E_{\varepsilon}\left[\sup_{h \in \B} 
  \< h, \sum_{i=1}^n \varepsilon_i k(\cdot, \xi_i)\> \right]
    = \frac{1}{n} \E_{\varepsilon}\left[
    \norm{\sum_{i=1}^n \varepsilon_i k(\cdot, \xi_i)}_{\mc{H}} \right] \\
  & \le \frac{1}{n} \left( \E_{\varepsilon}\left[
    \norm{\sum_{i=1}^n \varepsilon_i k(\cdot, \xi_i)}_{\mc{H}}^2 \right]\right)^{\half} 
    = \frac{1}{n} \sqrt{\sum_{i=1}^n \norm{k(\cdot, \xi_i)}_{\mc{H}}^2} \le \frac{B}{\sqrt{n}}.
    \qed
\end{align*}

Applying the claim to $\B_{\mc{A}}$ and and $\B_{\mc{S}\times\mc{A}}$, we get
\begin{align*}
  \sup_{s \in \mc{S}} \radcomp_{N_a} (\mc{G}_2(s)) \le \frac{B}{\sqrt{N_a}}~~~\mbox{and}~~~\radcomp_{NN_a}(\mc{G}_3)\le \frac{B}{\sqrt{NN_a}}.
\end{align*}

To bound $r_N\opt$, we use the following result due to~\citet{Mendelson03}.
\begin{lemma}[{\citet[Theorem 2.1]{Mendelson03}}]
  \label{lemma:mendelson-rkhs}
  If $\lambda_1 \ge 1/N$, then for all $r \ge 1/N$
  \begin{align*}
    \E\left[\radcomp_{N}\left\{ h \in \B: \E[h(S)^2] \le r \right\} \right]
    \lesssim \left(\frac{1}{N} \sum_{j = 1}^\infty
    \min\{\lambda_j, r \}\right)^\half.
  \end{align*}
\end{lemma}
\noindent Consider the case where the spectrum of $T_{k_{\mc{S}}}$ decay exponentially
\begin{align*}
  \left(\frac{1}{N} \sum_{j = 1}^\infty \min\left\{e^{-j^2},
  \frac{\sqrt{\log N}}{N} \right\}\right)^\half 
  \lesssim \left(\frac{1}{N} \sum_{j = 1}^{\sqrt{\log N}}
    \frac{\sqrt{\log N}}{N}
  + \frac{1}{N} \int_{\sqrt{\log N}}^\infty e^{-t^2} dt \right)^\half
  \lesssim \frac{\sqrt{\log N}}{N},
\end{align*}
where we use $\lesssim$ to denote inequality up to a numerical constant. We
conclude $r_{N}\opt \lesssim \lbound  \frac{\sqrt{\log N}}{N}$.  For polynomially
decaying spectrum $\lambda_j \lesssim j^{-2 \beta}$,
\begin{equation*}
  \sum_{j = 1}^\infty \min\{j^{-2\beta}, r\}
  \approx
  r^{\frac{2 \beta - 1}{2 \beta}}
  + \int_{r^{-1 / 2\beta}}^\infty t^{-2\beta} dt
  \asymp
  r^{\frac{2 \beta - 1}{2\beta}}.
\end{equation*}
Solving for the fixed point, we get
$r_N\opt  \asymp \lbound n^{-\frac{2\beta}{2\beta +1}}$.

Collecting these bounds and plugging them into Theorem~\ref{theorem:generalization}, we obtain the desired result.


\section{Experiment Details}
\label{section:experiment_details}

\paragraph{Hyperparameters}
We use hyperparameters from \citet{RiquelmeTuSn18} as follows. The
\textsc{NeuralGreedy, NeuralLinearTS} methods use a fully-connected neural
network with two hidden layers of containing 100 rectified linear units. The
networks are multi-output, where each output corresponds for predicted reward
under each action. The networks are trained using 100 mini-batch updates at
each period to minimize the mean-squared error via RMSProp with an initial
learning rate of 0.01. The learning rate is decayed after each mini-batch
update according to an inverse time decay schedule with a decay rate of 0.55
and the learning rate is reset the initial learning rate each update
period. For \textsc{Bootstrap-NN-TS}, we use 10 replicates and train each
replicate with all observations as in \citet{RiquelmeTuSn18}.

The Bayesian linear regression models used on the last linear layer for
\textsc{NeuralLinear-TS} use the normal inverse gamma prior
$\mbox{NIG}(\mu_a=\bm 0, \alpha_a=3, \beta_a=3,
\Lambda_a=0.25I_d$). \textsc{Linear-TS} uses a
$\mbox{NIG}(\mu_a=\bm 0, \alpha_a=6, \beta_a=6, \Lambda_a=0.25I_d$) prior
distribution.

The imitation models used by the \textsc{IL} methods are fully-connected
neural networks with two hidden layers of 100 units and hyperbolic tangent
activations. The networks use a Softmax function on the outputs to predict the
probability of selecting each action. The networks are trained using 2000
mini-batch updates via RMSProp to minimize the KL-divergence between the
predicted probabilities and the approximate propensity scores of the Thompson
sampling policy $\pi^{TS}$. For each observed context $S_i$, we approximate
the propensity scores of the Thompson sampling policy $\pi^{TS}(\cdot|S_i)$
using $N_a=2048$ Monte Carlo samples:
$\hat\pi^{TS}(a|S_i) = \frac{1}{N_a}\sum_{j=1}^{N_a} \indic{A_{ij} = a}$ where
$A_{ij}\sim \pi^{TS}(\cdot|S_i)$. We use an initial learning rate of
0.001. learning rate is decayed every 100 mini-batches according to an inverse
time decay schedule with a decay rate of 0.05. In practice, the
hyperparameters of the imitation model can be optimized or adjusted at each
update period by minimizing the KL-divergence on a held-out subset of the
observed data, which may lead to better regret performance. We do not use
inverse propensity-weighting on the observations, but we suspect that may it
may further improve performance.

We normalize all numeric features to be in [0,1] and one-hot encode all
categorical features. For the Warfarin dataset, we also normalize the rewards
to be in [0,1].

\paragraph{Posterior Inference for Bayesian Linear Regression}
\label{subsection:algorithms} 
\textbf{\textsc{Linear-TS}}: For each action, We assume the data for action $a$ were generated from the linear function: $r_a = \bm{s}^T \bm{\theta}_a + \varepsilon $ where $\varepsilon \sim \mathcal N (0, \sigma_a^2)$.
\begin{equation*}
  \sigma_a^2 \sim \mbox{IG}(\alpha_a, \beta_a), ~~\bm{\theta_a}|\sigma_a^2 \sim \mathcal N (\bm{\mu}_a, \sigma_a^2\Sigma_a),
\end{equation*} where the prior distribution is given by $\mbox{NIG}(\bm \mu_a, \Lambda_a, \alpha_a, \beta_a)$ and $\Lambda_a = \Sigma_a^{-1}$ is the precision matrix. After $n_a$ observations of contexts $X_a \in \R^{n_a \times (d+1)}$ and rewards $\bm y_a \in \R^{n_a \times 1}$, we denote the joint posterior by $P(\bm \theta_a, \sigma_a^2) \sim \mbox{NIG}(\bar{\bm \mu}_a, \bar{\Lambda}_a, \bar{\alpha}_a, \bar{\beta}_a)$, where
\begin{gather*}
\bar{\Lambda} =  X_a^T X_a + \Lambda_a, ~~ \bar{\bm \mu}_a = \bar{\Lambda}_a^{-1}(\Lambda_a \bm \mu_a + X_a^T \bm y_a)\\
\bar{\alpha}_a = \alpha + \frac{n_a}{2}, ~~ \bar{\beta}_a = \beta + \frac{1}{2}(\bm y_a ^T \bm y_a + \bm \mu_a^T \Lambda_a \bm \mu_a - \bar{\bm \mu}_a^T \bar{\Lambda}_a \bar{\bm \mu}_a).
\end{gather*}

\paragraph{Additional Results}
\textbf{Warfarin - 50 Actions} Figure \ref{fig:warfarin50_regret} shows the
cumulative regret on Warfarin using 50 actions. The imitation learning methods
match the cumulative regret of the vanilla Thompson sampling methods.
\begin{figure}[t!]
    \centering
    \includegraphics[width=.4\columnwidth]{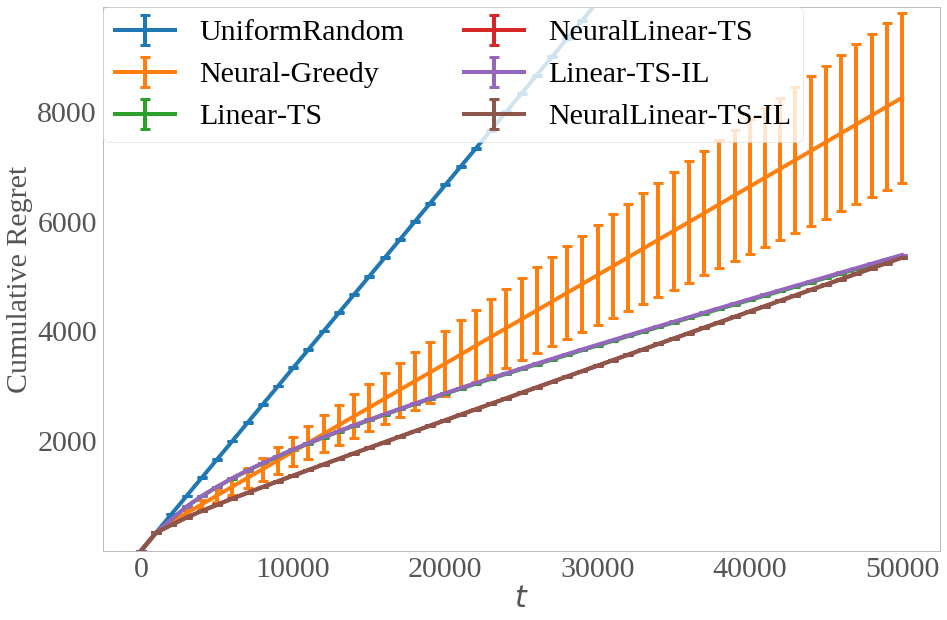}
    \caption{\label{fig:warfarin50_regret} Cumulative regret on the Warfarin problem with 50 actions}
\end{figure}


\section{Time and Space Complexity}
\label{section:time_complexity}
\subsection{Complexity of Evaluated Methods}
Table \ref{table:complexity} shows the decision-making time complexity for the methods used in our empirical analysis. The time complexity is equivalent to the space complexity for all evaluated methods.

\textbf{\textsc{NeuralGreedy}}
The time complexity of \textsc{NeuralGreedy} is the sum of matrix-vector multiplications involved in a forward pass. 

\textbf{\textsc{Linear-TS}}
The time complexity of \textsc{Linear-TS} is dominated by sampling from the joint posterior, which requires sampling from a multivariate normal with dimension $d$. To draw a sample from the joint posterior $P(\bm \theta, \sigma)$ at decision time, we first sample the noise level $\tilde\sigma^2 \sim \text{IG}(\alpha, \beta)$ and then sample $\tilde{\bm \theta}|\tilde\sigma^2 \sim \mathcal N\big(\bm \mu, \tilde\sigma^2\Lambda^{-1}\big)$. Rather than inverting the precision matrix $\tilde\Sigma = \tilde\sigma^2 \Lambda^{-1}$, we compute root decomposition (e.g. a Cholesky decomposition) of the $d \times d$ precision matrix $\Lambda = LL^T$. The root decomposition can be computed once, with cost $O(d^3)$, after an offline batch update and cached until the next batch update. Given $L^T$, we sample directly by computing $\tilde{\bm \theta} = \bm\mu + \bm z$, where
\begin{equation}\label{eqn:sampling_linear}
\frac{1}{\tilde\sigma}L^T\bm z = \bm \zeta
\end{equation} 
and $\bm \zeta \overset{\text{iid}}{\sim} \mathcal N(0, 1)$. Since $L^T$ is upper triangular, Eqn.~\eqref{eqn:sampling_linear} can be solved using a backward substitution in quadratic time: $O(d^2)$.\footnote{The alternative approach of inverting the precision matrix to compute the covariance matrix $\Sigma = \Lambda^{-1}$, computing and caching its root decomposition $\Sigma= L_\Sigma L_\Sigma^T$, and sampling $\tilde{\bm\theta}$ as $\tilde{\bm\theta} = \bm \mu + L_\Sigma \bm \zeta,$ where $\bm \zeta \overset{\text{iid}}{\sim} \mathcal N(0, 1)$ also has a time complexity of $O(d^2)$ from the matrix-vector multiplication $L_\Sigma \bm \zeta$.}

\textbf{\textsc{NeuralLinear-TS}} 
The time complexity of \textsc{NeuralLinear-TS} is the sum of a forward pass up to the last hidden layer and sampling from a multivariate normal with dimension $h_M$, where $h_M$ is the size of the last hidden layer. 

\textbf{\textsc{Imitation Learning}}
The \textsc{IL} methods have the same time complexity as \textsc{NeuralGreedy}, ignoring the cost of sampling from multinomial with $k$ categories.

\subsection{Complexity Using Embedded Actions}
An alternative modeling approach for the non-imitation methods is to embed the action with the context as input to the reward model. 

\textbf{\textsc{NeuralGreedy}}
Using an embedded action, the time complexity for a forward pass up to the last layer is $O_{\text{last-layer}} = O\big(kd_ah_1 + k\sum_{m=1}^{M-1} h_mh_{m+1}\big)$ because the input at decision time is a $k \times d_a$ matrix where the context is embedded with each of the $k$ actions and the each context-action vector has dimension $d_a$. The time complexity of computing the output layer remains $O(kh_M)$. The space complexity remains linear in the number of parameters, but it also requires computing temporary intermediate tensors of size $k \times h_m$  for $m=1...M$: $O\big(d_ah_1 + \sum_{m=1}^{M-1} h_mh_{m+1} + \sum_{m=1}^{M}kh_m\big)$.

\textbf{\textsc{Linear-TS}}
Linear-TS with an embedded action only requires using a single sample of the parameters, which yields a complexity of to $O(d_a^2+kd_a)$ for \textsc{Linear-TS}. The space complexity is also $O(d_a^2+kd_a)$.

\textbf{\textsc{NeuralLinear-TS}}
For \textsc{NeuralLinear-TS} the time complexity of computing the outputs given the last hidden layer is $O(h_M^2 + kh_M)$, since only a single sample of $h_M$ parameters is required for computed the reward for all actions. The space complexity for \textsc{NeuralLinear-TS} the sum the space complexities of \textsc{NeuralGreedy} and \textsc{Linear-TS}.

\textbf{\textsc{Imitation Learning}}
The computatiuonal cost of the \textsc{IL} methods would be unchanged.

We choose to empirically evaluate models \emph{without} embedded actions because linear methods using embedded actions cannot model reward functions that involve non-linear interactions between the contexts and actions, whereas modeling each action independently allows for more flexibility. \citet{RiquelmeTuSn18}  find that Thompson sampling using disjoint, exact linear bayesian regressions are a strong baseline in many applications. Furthermore, \citet{RiquelmeTuSn18} observe that it is important to model the noise levels independently for each action.

\subsection{Complexity of Alternative Methods}
Alternative Thompson sampling methods including mean-field approaches, the low-rank approximations of the covariance matrix, and bootstrapping can also decrease the computational cost of posterior sampling. Mean-field approaches can reduce time complexity of sampling parameters from the posterior from quadratic $O(n^2)$ to linear $O(n)$ in the number of parameters $n$.\footnote{We describe space complexity in terms of the number of parameters $n$, so that we do not make assumptions about the underlying model.} However, assuming independence among parameters has been observed to result in worse performance in some settings \citep{RiquelmeTuSn18}. Low-rank approximations of the covariance matrix allow for sampling parameters in $O((n+1)\rho)$, where $\rho$ is the rank of the approximate covariance, but such methods have a space complexity of $O(\rho n)$ since they require storing $\rho$ copies of the parameters \cite{zhang2018noisy,maddox2019swag}. Bootstrapping also requires storing multiple copies of the parameters, so the space is $O(bn)$ where $b$ is the number of bootstrap replicates. However, bootstrapping simply requires a multinomial draw to select one set of bootstrapped parameters. All these methods require a forward pass using the sampled parameters, and the time complexity is the sum of the time complexities of sampling parameters and the forward pass.
\begin{table}[t!]
\centering
\caption{\label{table:complexity} Decision-making time complexity and space complexity for each method . For methods relying on fully-connected neural networks, the time complexity of a forward pass to the last hidden layer is
  $C_{\mbox{last-layer}} = dh_1 + \sum_{m=1}^{M-1} h_mh_{m+1}$, where
  $d$ is the dimension of the context and $h_m$ is the number of units in
  hidden layer $m$. For \textsc{Bootstrap-NN-TS}, $B$ denotes the number of bootstrap replicates.}
\begin{small}
\begin{sc}
\begin{tabular}{lcc}
\toprule
Method & Time Complexity & Space Complexity\\
\midrule
NeuralGreedy & $O(C_{\mbox{last-layer}}) + O(kh_M)$ & $O(C_{\mbox{last-layer}}) + O(kh_M)$\\
Linear-TS & $O(kd^2)$ & $O(kd^2)$\\
NeuralLinear-TS & $O(C_{\mbox{last-layer}}) + O\big(kh_M^2)$ & $O(C_{\mbox{last-layer}}) + O\big(kh_M^2)$\\
Bootstrap-NN-TS & $O(C_{\mbox{last-layer}}) + O(kh_M)$ & $O(C_{\mbox{last-layer}}\cdot B) + O(kh_MB))$\\
IL & $O(C_{\mbox{last-layer}}) + O(kh_M)$\\
\bottomrule
\end{tabular}
\end{sc}
\end{small}
\end{table}



\end{document}
